\newtheorem{theorem}{Theorem}
\newtheorem{lemma}[theorem]{Lemma}
\title{Efficient and Unbiased Sampling from Boltzmann
\\Distributions via Variance-Tuned Diffusion Models}
\author{\name{Fengzhe Zhang}\email{fz287@cam.ac.uk}\\\addr{University of Cambridge}
\AND
\name{Laurence I. Midgley}\email{laurence@angstrom-ai.com}\\\addr{Ångström AI}\\\addr{University of Cambridge}
\AND
\name{José Miguel Hernández-Lobato}\email{jmh233@cam.ac.uk}\\\addr{Ångström AI}\\\addr{University of Cambridge}}
\begin{document}

\maketitle

\begin{abstract}
Score-based diffusion models (SBDMs) are powerful amortized samplers for Boltzmann distributions; however, imperfect score estimates bias downstream Monte Carlo estimates. Classical importance sampling (IS) can correct this bias, but computing exact likelihoods requires solving the probability-flow ordinary differential equation (PF–ODE), a procedure that is prohibitively costly and scales poorly with dimensionality.
We introduce Variance-Tuned Diffusion Importance Sampling (VT-DIS), a post-training method that adapts the per-step noise covariance of a pretrained SBDM by minimizing the $\alpha$-divergence ($\alpha=2$) between its forward diffusion and reverse denoising trajectories. VT-DIS assigns a single trajectory-wise importance weight to the joint forward–reverse process, yielding unbiased expectation estimates at test time with negligible inference-time overhead compared to standard sampling.
On the DW-4, LJ-13, and alanine-dipeptide benchmarks, VT-DIS achieves effective sample sizes of approximately 80\%, 35\%, and 3.5\%, respectively, while using only a fraction of the computational budget required by vanilla diffusion + IS or PF-ODE–based IS. Our code is available at \url{https://github.com/fz920/cov_tuned_diffusion}.
\end{abstract}

\section{Introduction}
\label{section: introduction}

Sampling from Boltzmann distributions is central to many applications in statistical physics, chemistry, and materials science. Accurate equilibrium samples enable \emph{in silico} prediction of thermodynamic observables, accelerating the discovery of catalysts, drugs, and novel materials and reducing reliance on costly wet-lab experiments. However, the high dimensionality and multimodal nature of realistic energy landscapes render traditional Markov chain Monte Carlo (MCMC)~\citep{metropolis1953equation,hastings1970monte} and molecular dynamics (MD) simulations~\citep{frenkel2023understanding} prohibitively time-consuming for many practical problems.

\textbf{Normalizing-flow} (NF) Boltzmann generators~\citep{noe2019boltzmann,tan2025scalable,kohler2020equivariant} amortize sampling by learning an expressive, invertible mapping from a simple base distribution to the target, and exact density evaluation of generated samples permits unbiased importance sampling (IS). However, the requirement of tractable Jacobian determinants constrains architectural flexibility, becoming a bottleneck in high dimensions. Continuous normalizing flows (CNFs)~\citep{chen2018neural,grathwohl2019scalable} mitigate this issue by parameterizing the transformation as an ordinary differential equation (ODE) and have been combined with \(\mathrm{E}(3)\)-equivariant architectures for molecular systems~\citep{garcia2021n,klein2023equivariant}. Yet even CNFs demand expensive Jacobian computations for log-density evaluation.

\textbf{Score-based diffusion models} (SBDMs)~\citep{sohl2015deep,ho2020denoising,song2020score} offer an expressive alternative that does not require invertibility. As with any learned sampler, imperfect score estimates bias downstream Monte Carlo estimates. Two IS-based corrections exist for a pretrained score network: (i) solve the probability flow ODE to obtain exact likelihoods—an approach that is computationally intensive, and whose approximations reintroduce bias; or (ii) solve the reverse stochastic differential equation (SDE) and perform IS over the joint forward–reverse trajectory, whose density factorizes into Gaussian terms; however, naively applying this approach typically yields a very low effective sample size (ESS) because the joint proposal distribution cannot fully cover the target distribution.

We adopt the second approach and show that a simple post-training covariance adjustment makes it practical. We introduce \textbf{variance-tuned diffusion importance sampling} (VT-DIS), which freezes the score network and optimizes only the per-step noise covariance by minimizing the \(\alpha\)-divergence (\(\alpha = 2\)) between the forward diffusion and reverse sampling processes. VT-DIS yields an unbiased trajectory-wise estimator and achieves high ESS at an orders-of-magnitude lower computational cost than vanilla diffusion + IS or NF-based samplers. Our contributions are:
\begin{itemize}[left=0in]
\item \textbf{Algorithm.} We propose VT-DIS, a post-training procedure that endows diffusion-based Boltzmann samplers with high-ESS, trajectory-wise importance sampling while incurring negligible inference-time overhead relative to sample generation. Furthermore, we extend VT-DIS to learn both diagonal and full covariance structures, moving beyond isotropic noise.
\item \textbf{Equivariant extension.} We demonstrate how VT-DIS integrates with \(\mathrm{E}(3)\)-equivariant diffusion models, enabling the unbiased generation of molecular samples that respect desired symmetry constraints.
\item \textbf{Empirical validation.} On Gaussian mixtures, toy many-particle systems, and molecular benchmarks (DW-4, LJ-13, and alanine dipeptide), VT-DIS significantly increases ESS relative to naive diffusion + IS with the same number of sampling steps, while incurring only a fraction of the computational cost required for exact likelihood computation via the probability-flow ODE.
\end{itemize}

\section{Related Work}
\label{sec:related}

Normalizing-flow (NF) Boltzmann generators learn an invertible mapping from a simple base distribution to the Boltzmann target, enabling exact evaluation of sample densities and unbiased importance sampling via the change-of-variables formula~\citep{noe2019boltzmann}. Subsequent work improved expressiveness and scalability: Continuous Normalizing Flows (CNFs) based on neural ODEs~\citep{chen2018neural,grathwohl2018ffjord}; symmetry-aware flows introduce \(\mathrm{E}(3)\)-equivariance for molecular systems~\citep{klein2023equivariant}. All CNF variants, however, still require computationally expensive Jacobian determinants or ODE integration for exact likelihood evaluation, making unbiased sampling prohibitive for large biomolecules.

Score-based diffusion models learn the noise-perturbed score and generate samples either by reversing the stochastic differential equation (SDE)~\citep{sohl2015deep,ho2020denoising} or by integrating the probability-flow ODE, which yields exact likelihoods~\citep{song2019generative}. Although diffusion methods have proven effective for molecular data~\citep{hoogeboom2022equivariant}, evaluating proposal log densities via the PF ODE still requires Jacobian-trace computation and becomes expensive in high dimensions. 


Beyond the two popular approaches considered above, neural samplers \citep{vargas2023denoising,richter2023improved,vargas2023transport,zhang2021path} can, in principle, generate unbiased samples directly from a target energy function without requiring training data. However, \citet{he2025no} show that these methods typically incur substantial computational overhead and are often less efficient than generating training data via MCMC and subsequently fitting a diffusion model. Scalability is also a concern: to the best of our knowledge, no neural sampler currently performs well on molecules in 3D Cartesian coordinates—the focus of this work. We therefore omit neural-sampler baselines from our experiments.

\textbf{VT-DIS: Bridging the Gap.}  
VT-DIS retains the efficient stochastic reverse trajectory of standard diffusion models while enabling unbiased reweighting. Instead of integrating the ODE, VT-DIS learns per-step Gaussian covariances by minimizing the \(\alpha\)-divergence (\(\alpha = 2\)) between the forward diffusion and reverse sampling processes, thereby maximizing ESS. Concurrent work has tuned covariances under the Kullback–Leibler divergence to improve sample quality~\citep{ou2024improving,bao2022estimating}; VT-DIS differs by explicitly optimizing the \(\alpha\)-divergence objective for importance-sampling efficiency.

\section{Background}
Before presenting our method, we review three key ingredients: (i) score‐based diffusion models (DMs), (ii) their $\mathrm{E}(3)$‐equivariant extensions, and (iii) the role of importance sampling (IS) in obtaining unbiased estimators at inference time.

\subsection{Score-Based Diffusion Models}
\label{section:score_based_diffusion_models}
Score-based diffusion models (SBDMs)~\citep{sohl2015deep,ho2020denoising,song2020score} generate data by first \emph{diffusing} a sample from the data distribution $\pi_{\text{data}}(\boldsymbol{x})$ with progressively increasing Gaussian noise and then \emph{denoising} it in reverse time.  
The forward process is described by the Itô stochastic differential equation (SDE)
\begin{equation}
\label{eq:forward_sde}
d\boldsymbol{x}_{t}
  \;=\;
  \boldsymbol{f}\!\bigl(\boldsymbol{x}_{t},t\bigr)\,dt
  \;+\;
  g(t)\,d\boldsymbol{w}_{t},
\end{equation}
where $t\in[0,T]$ with $T>0$, $\boldsymbol{f}$ is the drift, $g$ the diffusion coefficient, and $\boldsymbol{w}_{t}$ a standard Wiener process.  
Throughout this work we follow the commonly used \emph{variance-exploding} setting of Karras \textit{et~al.}\citep{karras2022elucidating} and set $\boldsymbol{f}\equiv\boldsymbol{0}$ and $g(t)=\sqrt{2t}$. The forward SDE admits two equivalent reverse-time formulations:
\begin{align}
\label{eq:reverse_sde}
d\boldsymbol{x}_{t} &\,=\,
    -2t\,\nabla_{\!\boldsymbol{x}}\log p_{t}(\boldsymbol{x}_{t})\,dt
    \;+\;
    \sqrt{2t}\,d\bar{\boldsymbol{w}}_{t},
    &&\text{(reverse SDE)}\\
\label{eq:pf_ode}
d\boldsymbol{x}_{t} &\,=\,
    -t\,\nabla_{\!\boldsymbol{x}}\log p_{t}(\boldsymbol{x}_{t})\,dt,
    &&\text{(probability-flow ODE),}
\end{align}
where $p_{t}(\boldsymbol{x}_{t})$ is the marginal density at time~$t$ and $\bar{\boldsymbol{w}}_{t}$ is a Wiener process running backwards from $T$ to $0$.  
In practice we approximate the (intractable) score $\nabla_{\!\boldsymbol{x}}\log p_{t}(\boldsymbol{x}_{t})$ with a neural network $s_{\boldsymbol{\theta}}(\boldsymbol{x}_{t},t)$ trained by score matching~\citep{hyvarinen2005estimation,song2019generative}.

To draw a sample, we integrate either \eqref{eq:reverse_sde} or \eqref{eq:pf_ode} backward from $t=T$ to a small cutoff $\varepsilon\approx0$, and take $\boldsymbol{x}_{\varepsilon}$ as an approximate draw from $\pi_{\text{data}}$.  
Solving the reverse SDE with the discrete DDPM sampler~\citep{ho2020denoising} amounts to the update rule
$p_{\boldsymbol{\theta}}\bigl(\boldsymbol{x}_{n-1}\mid\boldsymbol{x}_{n}\bigr)=
   p\bigl(\boldsymbol{x}_{n-1}\mid
            \boldsymbol{x}_{n},\hat{\boldsymbol{x}}_{0}(\boldsymbol{x}_{n}, t_n;\boldsymbol{\theta})
     \bigr),$
where $\hat{\boldsymbol{x}}_{0}$ is the network’s prediction of the clean signal at step~$n$.  
Given a time grid $\{t_{n}\}_{n=0}^{N}$, we obtain the full trajectory density
$p_{\boldsymbol{\theta}}\bigl(\boldsymbol{x}_{0:N}\bigr)
   =
   p(\boldsymbol{x}_{N})
   \prod_{n=1}^{N}
      p_{\boldsymbol{\theta}}\bigl(\boldsymbol{x}_{n-1}\mid\boldsymbol{x}_{n}\bigr)$.
Both the learned reverse-time transition $p_{\boldsymbol{\theta}}(\boldsymbol{x}_{n-1}\mid\boldsymbol{x}_{n})$ and the reference forward kernel $q(\boldsymbol{x}_{n}\mid\boldsymbol{x}_{n-1})$ are Gaussian; their closed-form expressions are provided in Appendix~\ref{appendix: diffusion with is}.  
Alternatively, one may solve the probability-flow ODE~\eqref{eq:pf_ode} with any off-the-shelf ODE solver to obtain samples at $t=\epsilon$.

\subsection{E(3)–Equivariant Score-Based Diffusion Models}
\label{section:e3_equivariant_dms}

Standard diffusion models ignore the geometric symmetries that characterise many structured datasets.  
In molecular systems, for instance, the potential-energy surface is invariant under global \emph{translations}, \emph{rotations}, and \emph{reflections}—the Euclidean group $\mathrm{E}(3)$—as well as permutations of identical atoms.  
Encoding these symmetries explicitly is critical for producing physically valid and statistically efficient samplers.

\textbf{Equivariant backbone.}
Hoogeboom \textit{et~al.}~\citep{hoogeboom2022equivariant} introduced \emph{$\mathrm{E}(3)$–equivariant diffusion models} by replacing the standard score network with an Equivariant Graph Neural Network (EGNN)~\citep{satorras2021n}.  
Following the analysis of Xu \textit{et~al.}~\citep{xu2022geodiff}, a generative process that is equivariant to $\mathrm{E}(3)$ transformations must satisfy two conditions:

\begin{enumerate}[leftmargin=*]
\item \textbf{Rotation and reflection invariance.}  
      For any orthogonal transformation $R\!\in\!\mathrm{O}(3)$,
      \[
      p(\boldsymbol{x}_{N}) \;=\; p(R\boldsymbol{x}_{N}), 
      \quad
      p(\boldsymbol{x}_{n-1}\mid\boldsymbol{x}_{n})
      \;=\;
      p(R\boldsymbol{x}_{n-1}\mid R\boldsymbol{x}_{n}),
      \]
      which implies that the marginal $p(\boldsymbol{x}_{n})$ is invariant at every time step.

\item \textbf{Translation invariance.}  
      Perfect translation invariance cannot hold for non-degenerate continuous densities; instead, it is imposed by restricting all configurations to the \emph{centre-of-mass (CoM) subspace}
      \(
      \sum_{i}\boldsymbol{x}_{i}=\boldsymbol{0},
      \)
      i.e.\ every sample is translated so that its CoM is at the origin.
\end{enumerate}

\textbf{Practical implementation.}
These constraints are enforced by replacing ordinary Gaussians $\mathcal{N}(\cdot)$ with Gaussians projected onto the zero–CoM subspace, denoted $\mathcal{N}_{\!x}(\cdot)$ (sampling and density evaluation details are given in Appendix~\ref{app:vt-dis-e3}).  
Sampling proceeds exactly as in Sec.~\ref{section:score_based_diffusion_models}: we integrate either the reverse SDE or the probability-flow ODE, but all intermediate states now live in the zero-CoM subspace and all conditional kernels are the projected Gaussians $\mathcal{N}_{\!x}$.  
The resulting pipeline produces molecular configurations that are E(3)-equivariant by construction.

\subsection{Diffusion Models with Importance Sampling}
\label{section:dm_with_is}

Assume that the (unnormalized) target density $\pi_{\text{data}}(\cdot)$ can be evaluated point-wise, whereas drawing i.i.d.\ samples is infeasible.  
Directly using raw outputs from a pretrained diffusion model therefore yields biased estimates of expectations  
$\mathbb{E}_{\pi_{\text{data}}}\bigl[\phi(\boldsymbol{x})\bigr]$.  
IS removes this bias through the weight  
$\pi_{\text{data}}(\boldsymbol{x})/p_{0}(\boldsymbol{x})$,  
where $p_{0}$ denotes the diffusion sampler’s marginal at $t=0$.  
Unfortunately, $p_{0}$ is typically intractable, and any approximation re-introduces bias.  
Because we may generate samples either by integrating the reverse SDE or by solving the probability-flow ODE, two corresponding IS strategies arise.

\textbf{Reverse SDE pathway.}
When sampling with a DDPM-type discretization of the reverse SDE, define
\begin{equation}
\label{eq:joint_distributions}
q(\boldsymbol{x}_{0:N})
 =
 \pi_{\text{data}}(\boldsymbol{x}_{0})
 \prod_{n=0}^{N-1}
     q(\boldsymbol{x}_{n+1}\mid\boldsymbol{x}_{n}),
\qquad
p_{\boldsymbol{\theta}}(\boldsymbol{x}_{0:N})
 =
 p(\boldsymbol{x}_{N})
 \prod_{n=0}^{N-1}
     p_{\boldsymbol{\theta}}(\boldsymbol{x}_{n}\mid\boldsymbol{x}_{n+1}),
\end{equation}
where both kernels are Gaussian and therefore tractable.  
The trajectory weight $w = \frac{q(\boldsymbol{x}_{0:N})}{p_{\boldsymbol{\theta}}(\boldsymbol{x}_{0:N})}$
yields an unbiased estimator (see Appendix~\ref{appendix: diffusion with is}).  
In practice, however, the mismatch introduced by a finite time discretization causes the effective sample size (ESS) to collapse.

\textbf{PF ODE pathway.}
If instead we solve the probability-flow ODE~\eqref{eq:pf_ode}, the likelihood of the terminal sample can be written as~\citep{chen2018neural,song2020score}
\begin{equation}
\label{eq:ode_likelihood}
\log p_{0}\bigl(\boldsymbol{x}(0)\bigr)
 =
 \log p_{T}\bigl(\boldsymbol{x}(T)\bigr)
 +
 \int_{0}^{T}
   \nabla_{\boldsymbol{x}}\cdot\bigl(-t\,s_{\boldsymbol{\theta}}(\boldsymbol{x}(t),t)\bigr)\,dt .
\end{equation}
where $s_{\boldsymbol{\theta}}$ is the score model. Computing the divergence term exactly is prohibitively expensive and is typically approximated with the Hutchinson–Skilling trace estimator~\citep{skilling1989eigenvalues,hutchinson1989stochastic}. Such stochastic estimates, together with time-discretisation error, re-introduce bias and defeat the purpose of IS.

\textbf{Limitations.}
Although both pathways are theoretically sound, each faces practical obstacles: the reverse-SDE approach yields vanishing ESS, whereas the ODE approach incurs excessive compute or bias.  
In the next section we address the first limitation by introducing a covariance-adjusted scheme that dramatically increases ESS in practice.

\section{Method}
\label{section: method}
In this section, we introduce the post-training method VT-DIS to improve ESS for diffusion models with importance sampling. We will first give an overview of the whole algorithm and explain each part of the algorithm in more details and the rationale behind them, then we will explain how our method can also be applied to E(3)-Equivariant diffusion models for molecular generations.

\subsection{Overview}
We build on a pretrained score-based diffusion model, freezing its score network \(s_{\boldsymbol{\theta}}\), and introduce a post-training step that adjusts only the per-step noise covariance \(\{\Sigma_{\boldsymbol{\phi}}(n)\}_{n=1}^{N}\), with $\boldsymbol{\phi}$ being learnable parameters. The objective is to make the learned reverse-time process better agree with the true forward (noising) process by minimizing the \(\alpha=2\) divergence, which directly controls the variance of importance weights and hence the ESS. In practice, we: (1) Simulate target trajectories \(\{\boldsymbol{x}_{0:N}\}\) from the forward SDE; (2) Compute the log–density ratio between the forward and reverse joint distributions given current \(\{\Sigma_{\boldsymbol{\phi}}(n)\}_{n=1}^{N}\); (3) Update the covariance parameters by minimizing the Monte Carlo estimate of the \(\alpha=2\) divergence; (4) At inference, sample with the tuned covariances and assign each trajectory a single importance weight. We summarize the whole process in Algorithm~\ref{alg:vt-dis}.

\begin{algorithm}[t]
  \caption{Post-training Covariance Tuning (VT-DIS)}
  \label{alg:vt-dis}
  \begin{algorithmic}[1]
    \Require Score network \(s_{\boldsymbol{\theta}}\), initial covariances \(\{\Sigma_{\boldsymbol{\phi}}\}\), time grid \(\{t_n\}_{n=0}^N\), batch size \(M\)
    \Ensure  Tuned covariances \(\{\Sigma_{\boldsymbol{\phi}^*}\}\)
    \Repeat
      \State Sample \(\{\boldsymbol{x}_{0:N}^{(i)}\}_{i=1}^M \sim q(\boldsymbol{x}_{0:N})\)
      \State \(\ell^{(i)}(\boldsymbol{\phi}) \gets \log q(\boldsymbol{x}_{0:N}^{(i)}) - \log p_{\{\Sigma_{\boldsymbol{\phi}}\}}(\boldsymbol{x}_{0:N}^{(i)})\)
      \State \(\displaystyle L(\boldsymbol{\phi}) \;\gets\; \frac{1}{M}\sum_{i=1}^M \exp\bigl(\ell^{(i)}(\boldsymbol{\phi})\bigr)\)
      \State \(\{\Sigma_{\boldsymbol{\phi}}\} \!\gets\! \{\Sigma_{\boldsymbol{\phi}}\} - \eta\,\nabla L(\boldsymbol{\phi})\)
    \Until{convergence}
    \State \(\{\Sigma_{\boldsymbol{\phi}^*}\} \gets \{\Sigma_{\boldsymbol{\phi}}\}\)
  \end{algorithmic}
\end{algorithm}

\subsection{Covariance Parameterization}

We explore three complementary ways to parameterize the per-step covariance \(\Sigma_{\boldsymbol{\phi}}(n)\).

\paragraph{Isotropic Covariance}  
The standard DDPM proposal $p_{\boldsymbol{\theta}}(\boldsymbol{x}_{n-1}|\boldsymbol{x}_n)$ uses an isotropic variance \(\Sigma_n = \sigma^2_{\rm ddpm}(n)\boldsymbol{I}\) (where $\sigma^2_{\rm ddpm}(n)$ is defined in Appendix~\ref{appendix: diffusion with is}, \eqref{eq:ddpm-variance}). To tune this, for each $n=1,\cdots, N-1$ we introduce a nonnegative scalar \(\eta_n\) so that
\begin{equation}
\label{equation: isotropic parameterization}
\Sigma_{\boldsymbol{\phi}}(n) = \eta_n\sigma^2_{\rm ddpm}(n)\boldsymbol{I}
\end{equation}
with \(\eta_n\ge0\) enforced (e.g.\ via \texttt{SoftPlus}) and initialized at \(\eta_n=1\) to recover the standard DDPM baseline. We use $\boldsymbol{\phi}$ to represent all tunable parameters $\eta_n$ for each $n$.

\paragraph{Structured covariances}  
To capture per-coordinate anisotropy or correlations, we replace the scalar multiplier with vector or matrix parameters. A diagonal form is
\begin{equation}
\label{equation: diagonal parameterization}
\Sigma_{\boldsymbol{\phi}}(n) = \sigma^2_{\rm ddpm}(n)\,\mathrm{diag}(\eta_{n,1},\dots,\eta_{n,d}),
\end{equation}
with $\eta_{n,i}\geq0$ and initialized with each \(\eta_{n,i}=1\) so that baseline is recovered. A fully correlated covariance uses a learnable matrix \(L_n\in\mathbb{R}^{d\times d}\): $\Sigma_{\boldsymbol{\phi}}(n) = \sigma_{\rm ddpm}^2(n)\,L_n\,L_n^\top$,
with \(L_n\) initialized to the identity to recover the baseline. To limit parameter growth which makes optimization hard especially when number of time steps is large, we further propose a practical low–rank plus isotropic decomposition:
\begin{equation}
\label{equation: low rank parameterization}
\Sigma_{\boldsymbol{\phi}}(n) = \sigma^2_{\rm ddpm}(n)\bigl(A_nA_n^\top + \alpha_n \boldsymbol{I}\bigr),
\end{equation}
where \(A_n\in\mathbb{R}^{d\times k}\) where $k<d$ is chosen depending on the computing budget and \(\alpha_n\ge0\) are learned, and \(A_nA_n^\top+\alpha_n \boldsymbol{I}\) is initialized to \(I\) by initializing $A_n$ to be zero matrix and $\alpha_n=1$.

\paragraph{Sample-dependent models} 
In the case that the number of time steps is large and even the low-rank version (\eqref{equation: low rank parameterization}) becomes infeasible, we can use a neural network with input $t_n$ to predict the optimal covariance matrix at each step. Additionally, for maximal flexibility, \(\Sigma_{\boldsymbol{\phi}}(n)\) can be made a function of the current state \(\boldsymbol{x}_n\) (and \(t_n\)) via a neural network. That is:
\begin{equation*}
\Sigma_{\boldsymbol{\phi}}(n) = \text{NN}_{\boldsymbol{\phi}}(t_n)\quad\text{or}\quad\Sigma_{\boldsymbol{\phi}}(n) = \text{NN}_{\boldsymbol{\phi}}(\boldsymbol{x}_n, t_n)
\end{equation*}
Although this state-dependent form remains valid, our experiments did not show significant ESS gains over the time–only or time-and-sample dependent parameterizations and the additional function evaluation to get covariance increases the computational cost during inference of diffusion models. Therefore, we defer its exploration to future work.

\subsection{Extension to E(3)-Equivariant Diffusion}
\label{section: e3 diffusion with importance sampling}
Our VT-DIS covariance tuning readily extends to E(3)-equivariant diffusion models: the only modification is that all trajectories lie in the zero–center–of–mass (CoM) subspace. Isotropic covariances require no change, since it commutes with rotations, reflections, and CoM projection.

For diagonal or fully correlated covariances, the following form of covariance matrix will satisfy E(3)-equivariance and permutation equivariance:
\begin{equation}
\label{equation: e3 cov parameterization}
\Sigma_{\boldsymbol{\phi}}(n) \;=\; B_{\boldsymbol{\phi}}(n) \,\otimes\, I_3,
\end{equation}
where \(B_{\boldsymbol{\phi}}(n)\in\mathbb{R}^{M\times M}\) acts on the \(M\) particles, \(I_3\) on the three Cartesian coordinates and $\otimes$ denotes the Kronecker product. The matrices \(B_{\boldsymbol{\phi}}(n)\) are learned per step. This form automatically satisfies E(3)-equivariance (proof in Appendix~\ref{app:vt-dis-e3}). To enforce permutation symmetry, \(B_{\boldsymbol{\phi}}(n)\) must respect the specific particle-identities of the dataset. We illustrate two representative cases below.

\textbf{Many–particle systems} 
All particles are identical, so a diagonal \(B_{\boldsymbol{\phi}}(n)\) reduces to an isotropic covariance and adds no flexibility. Instead, we parameterize the full covariance
\begin{equation*}
B_{\boldsymbol{\phi}}(n)=(b_n-a_n)\boldsymbol{I} + a_n\boldsymbol{1}\boldsymbol{1}^\top
\end{equation*}
where $a_n$ and $b_n$ are learnable parameters with \(b_n>0\) and \(-\tfrac{b_n}{M-1} < a_n < b_n\) to ensure \(B_{\boldsymbol{\phi}}(n)\succ 0\). Appendix~\ref{app:vt-dis-e3} shows that this choice satisfies both E(3)-equivariance and full permutation invariance, as needed for systems such as $n$-body systems.

\paragraph{Molecular distributions.}
Real‐world molecules introduce chemical symmetries that are more subtle than the
“all–particles–indistinguishable’’ setting of the above section.
For instance, in alanine dipeptide only the three hydrogens bound to the same
carbon atom may be permuted freely; the remaining atoms are fixed.  To encode
such partial symmetries we assign every atom a symmetry label
\(
    L_i \in \{1,\dots,K\},
    \;i=1,\dots,M,
\)
where atoms sharing the same label are interchangeable while atoms with
different labels are not.  The covariance
\(
    B_{\boldsymbol{\phi}}(n)
\)
at sampling step $n$ is then constrained to depend on the pair $(i,j)$
\emph{only} through the labels $(L_i,L_j)$. Proof in Appendix~\ref{appendix: permutation equivariance for general molecule distributions}.

\begin{enumerate}[leftmargin=*]
\item \textbf{Diagonal (per–class variance).}\,
      Let
      \(
          \eta_{n,\ell}\ge 0
      \)
      be a learnable variance for class~$\ell$.
      \[
          B_{\boldsymbol{\phi}}(n)
          \;=\;
          \mathrm{diag}\bigl(
              \eta_{n,L_1},
              \eta_{n,L_2},
              \dots,
              \eta_{n,L_M}
          \bigr).
      \]
      Positive definiteness is automatic when every $\eta_{n,\ell}>0$.

\item \textbf{Block‐structured (full class–class covariance).}\,
      For any learnable matrix
      \(A_n\in\mathbb{R}^{K\times K}\) and a learnable scalar
      \(\alpha_n>0\).  Define
      \[
          \bigl[B_{\boldsymbol{\phi}}(n)\bigr]_{ij}
          \;=\;
          (A_nA_n^\top)_{L_iL_j}
          \;+\;
          \alpha_n\,\delta_{ij},
      \]
      where $\delta_{ij}$ is the Kronecker delta.
      The added diagonal term $\alpha_n I$ guarantees
      \(B_{\boldsymbol{\phi}}(n)\succ0\).
\end{enumerate}

\subsection{Objective Function}
Our aim is to align the reverse‐process joint proposal \(p_{\boldsymbol{\theta}}(\boldsymbol{x}_{0:N})\) with the forward‐diffusion joint \(q(\boldsymbol{x}_{0:N})\) (see \eqref{eq:joint_distributions}), in a manner that directly improves importance‐sampling performance.  Although one could minimize the Kullback–Leibler (KL) divergence \(D_{\mathrm{KL}}(q\|p_{\phi})\) (and indeed closed‐form optimal covariances under this criterion have been derived \citep{ou2024improving}), such minimization does not control tail mismatch.  Poor alignment in the tails leads to high variance of importance weights and thus low ESS when sample size is large. Instead, we minimize the \(\alpha\)-divergence with \(\alpha=2\)\footnote{We use the $\alpha$‑divergence
$D_{\alpha}\!\left(q\|p_{\phi}\right)=\frac{1}{\alpha(\alpha-1)}
\!\left(\int q(x)^{\alpha}\,p_{\phi}(x)^{1-\alpha}\,dx - 1\right)$ ($\alpha\neq 0,1$).
In our experiments we minimize the case $\alpha=2$; the optimized loss $\mathcal{L}(\phi)$ is the integral term above.}, which is proportional to the variance of importance weights:
\begin{equation}
\label{eq: objective function}
\arg\min_{\phi} D_{\alpha=2}\bigl(q\|p_{\phi}\bigr)
= \arg\min_{\phi}\left(\mathcal{L}(\phi):= \int \frac{q(\boldsymbol{x}_{0:N})}{p_{\phi}(\boldsymbol{x}_{0:N})}q(\boldsymbol{x}_{0:N})\,\mathrm{d}\boldsymbol{x}_{0:N}\approx\frac{1}{M}\sum_{i=1}^M \frac{q(\boldsymbol{x}^{(i)}_{0:N})}{p_{\phi}(\boldsymbol{x}^{(i)}_{0:N})}\right),\quad\boldsymbol{x}^{(i)}_{0:N}\sim q
\end{equation}

In practice, we draw \(M\) trajectories \(\{\boldsymbol{x}_{0:N}^{(i)}\}\) from the forward process\footnote{Empirically, we found that attempting to estimate \(D_2\) with samples from the learned reverse process \(p_{\theta}\) often induces mode collapse and degrades forward ESS.} \(q\) to form the unbiased Monte Carlo estimate and minimize $\log \mathcal{L}(\phi)$ for numerical stability (see Appendix~\ref{section: implementation of objective}). Optimizing this loss by gradient descent yields covariance parameters that minimize importance weight variance and thereby maximize ESS.

\subsection{Choice of Sampling Schedule}

\citet{karras2022elucidating} propose a family of continuous‐time schedules controlled by a parameter \(\rho\):
\[
t_n \;=\;\Bigl(t_N^{1/\rho} + \tfrac{n}{N-1}\,(t_0^{1/\rho}-t_N^{1/\rho})\Bigr)^{\rho},
\qquad t_0=\epsilon,\;t_N=T.
\]
In the limit \(\rho\to\infty\), this reduces to a fixed geometric grid $t_n = t_0 \Bigl(t_N/t_0\Bigr)^{n/(N-1)}$, which we found consistently yields lower \(\alpha=2\) divergence and higher ESS across all benchmarks. We also experimented with jointly tuning the schedule and per‐step covariances during VT‐DIS. While joint tuning improves performance on simple Gaussian mixtures, it proved unstable on molecular datasets. We defer a robust treatment of schedule learning to future work.

\subsection{Complexity \& Inference‐Time Cost}
Relative to a standard DDPM equipped with importance sampling, \textsc{VT-DIS} adds only a \emph{single, offline} optimization step: tuning the per-step noise covariances \(\{\Sigma_{\boldsymbol{\phi}}(n)\}\).  After this one-time cost, generate samples and importance weights only adds negligible overhead to standard sampling from diffusion models—no extra function evaluations. Unlike flow-based Boltzmann generators or diffusion generate samples by solving PF ODE, \textsc{VT-DIS} never has to integrate an ODE to evaluate marginal proposal log-densities, a procedure that becomes prohibitively expensive in high-dimensional settings.

\section{Experiments}
\label{section: experiments}
\textbf{Experimental setup.} We assume access to both the energy function and the training data used to fit the diffusion model (same assumption as in, e.g., \citet{klein2023equivariant,tan2025scalable}). We first train a diffusion model with an EGNN backbone \citep{satorras2021n} and EDM preconditioning \citep{karras2022elucidating} using a standard training procedure. We then apply VT-DIS to the same training set to optimize the proposal covariances. For evaluation, we compare VT-DIS with a DDPM+IS baseline under a fixed compute budget measured by the number of function evaluations (NFEs; i.e., sampling steps). We also compare against importance sampling with exact likelihoods computed via the probability-flow ODE (PF-ODE). Performance is reported as both forward and reverse effective sample size (ESS). We additionally report the Evidence Lower Bound (ELBO) and Evidence Upper Bound (EUBO) for the DDPM baseline and VT-DIS, which serve as complementary likelihood-based metrics to assess the quality of the generated samples.

We consider three representative classes of Boltzmann systems: (i) \textbf{Synthetic}: a high-dimensional two-mode Gaussian mixture (GMM-2); (ii) \textbf{Many-particle}: double-well (DW-4) and Lennard-Jones-13 (LJ-13) clusters~\citep{kohler2020equivariant}; and (iii) \textbf{Small-molecule}: alanine dipeptide (ALDP). For all targets, the (unnormalized) density \(\pi_{\mathrm{data}}\) is available, enabling exact importance weights. We use the atomic coordinate representation for each sample and train our models on these coordinates. Further implementation details are provided in Appendix~\ref{appendix:implementation-details}.

\subsection{\textsc{GMM-2}}\label{subsec:gmm2}
We begin with a two-component isotropic Gaussian mixture in $\mathbb{R}^{d}$:
\[
\pi_{\text{data}}(\boldsymbol{x}) 
= w_1\,\mathcal{N}\!\bigl(\boldsymbol{x};\,\boldsymbol{\mu}_1,\sigma_1^2\boldsymbol{I}\bigr) 
+ w_2\,\mathcal{N}\!\bigl(\boldsymbol{x};\,\boldsymbol{\mu}_2,\sigma_2^2\boldsymbol{I}\bigr),
\]
where $\boldsymbol{\mu}_1 = (1,\dots,1)^\top \in \mathbb{R}^d$, 
$\boldsymbol{\mu}_2 = (-2,\dots,-2)^\top \in \mathbb{R}^d$, 
$\sigma_1^2=\sigma_2^2 = 0.15$, and $w_1 = \tfrac{2}{3},\;w_2 = \tfrac{1}{3}$. We evaluate two ambient dimensions, \(d \in \{50,100\}\).  
Figure~\ref{fig:gmm2_ess} shows forward and reverse ESS as a function of the number of function evaluations (NFE) for \textsc{VT-DIS} with three noise–covariance parameterizations—\emph{isotropic}, \emph{diagonal}, and \emph{full rank}—together with the DDPM\,+\,IS baseline.

The plots reveal three key trends:  
(i) \textsc{VT-DIS} consistently outperforms the baseline in forward ESS, reverse ESS, or both, for every covariance parameterization.  
(ii) The full-rank covariance yields the highest forward and reverse ESS for both values of \(d\).  For \(d = 100\), however, the forward-ESS error bars are wide when $d=100$, indicating residual mismatch between the forward and reverse processes at very high dimension.  
(iii) With isotropic and diagonal covariances, reverse ESS improves as NFE grows, whereas forward ESS stays low, underscoring the importance of modeling cross-dimensional correlations in this setting.

\begin{figure}[t]
    \centering
    \includegraphics[width=0.9\linewidth]{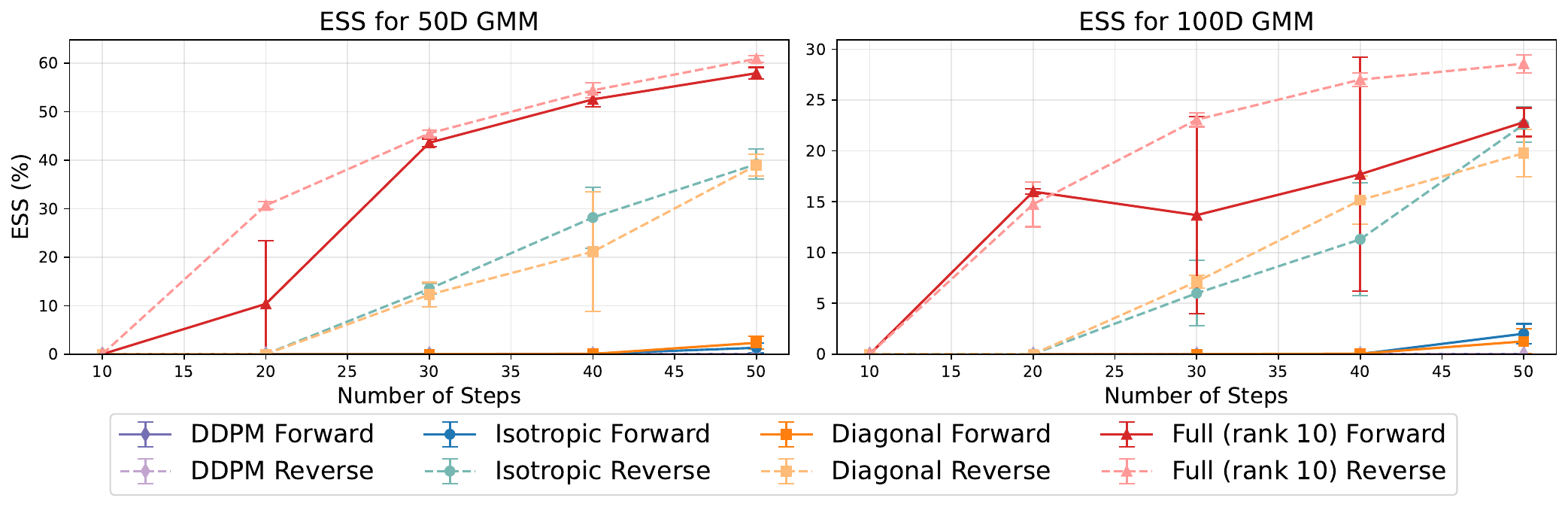}
    \caption{\textbf{GMM-2}: ESS versus NFEs for dimensions \(d=50\) and \(d=100\), comparing VT-DIS with the DDPM + IS baseline. 
    ESS estimates are computed using \(10^5\) samples.}
    \label{fig:gmm2_ess}
\end{figure}

\subsection{Many–particle systems: DW-4 and LJ-13}\label{subsec:dw4_lj13}

We next evaluate VT-DIS on two prototypical many-body systems with pair-wise interactions:
\emph{DW-4} (double-well potential, four particles in \( \mathbb{R}^{2} \), \(d=8\))
and \emph{LJ-13} (Lennard–Jones cluster, thirteen particles in \( \mathbb{R}^{3} \), \(d=39\))\citep{kohler2020equivariant}.

\textbf{ESS versus NFEs.}  
Figure~\ref{fig:ess_dw4_lj13} reports forward and reverse ESS as a function of the number of NFEs. On the low-dimensional DW-4 system, VT-DIS raises both ESS measures from \(\le1\%\) to \(\approx40\%\) using only 100 NFEs. The improvement is even more pronounced on the higher-dimensional LJ-13 cluster: the standard DDPM + IS baseline yields virtually \(0\%\) ESS up to 400 NFEs, whereas VT-DIS achieves \(\approx35\%\) ESS on both forward and reverse measures with the same budget. Due to the permutation-equivariant constraint of the many-particle system, the full covariance matrix has only one additional degree of freedom compared to an isotropic covariance. In this setting, we observe no significant ESS gain from using the full covariance, although both forward and reverse ESS exhibit lower standard deviations than with isotropic noise.  Across both tasks, ESS increases monotonically with the number of NFEs, facilitating a clear accuracy–cost trade-off.
\begin{figure}[t]
  \centering
  \begin{subfigure}[b]{0.32\linewidth}
    \centering
    \includegraphics[width=\linewidth]{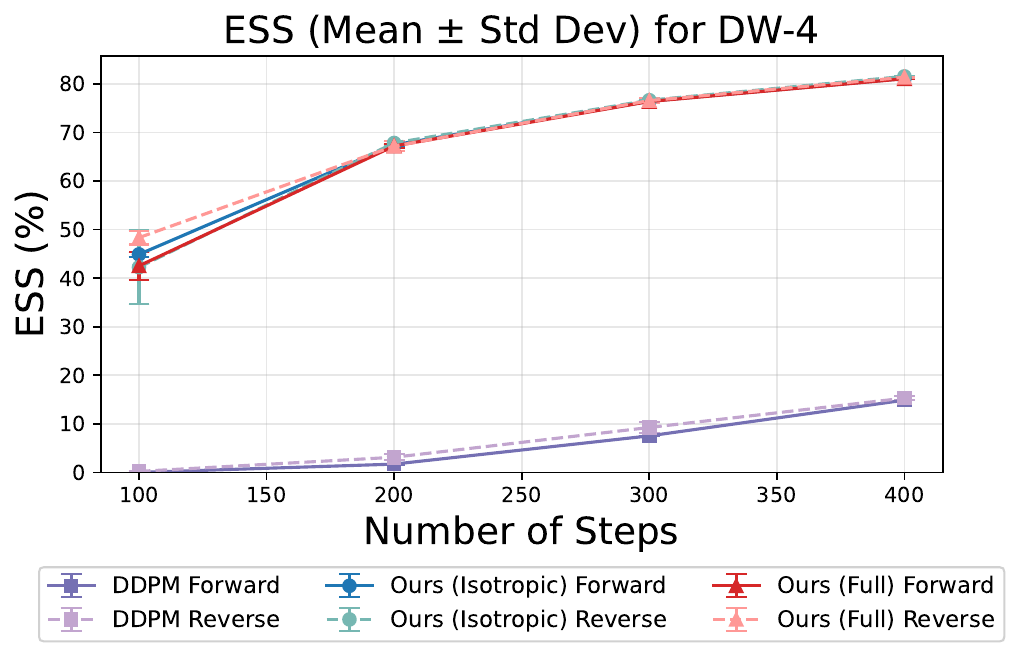}
    \caption{DW-4 (\(d=8\))}
    \label{fig:ess_dw4}
  \end{subfigure}
  \begin{subfigure}[b]{0.32\linewidth}
    \centering
    \includegraphics[width=\linewidth]{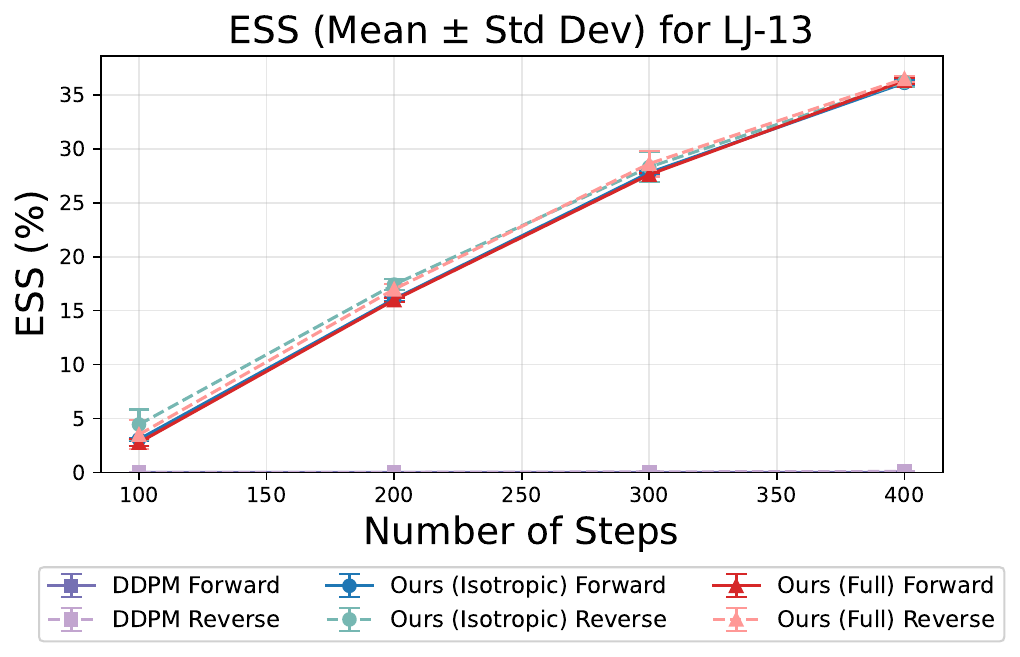}
    \caption{LJ-13 (\(d=39\))}
    \label{fig:ess_lj13}
  \end{subfigure}
  \begin{subfigure}[b]{0.32\linewidth}
    \centering
    \includegraphics[width=\linewidth]{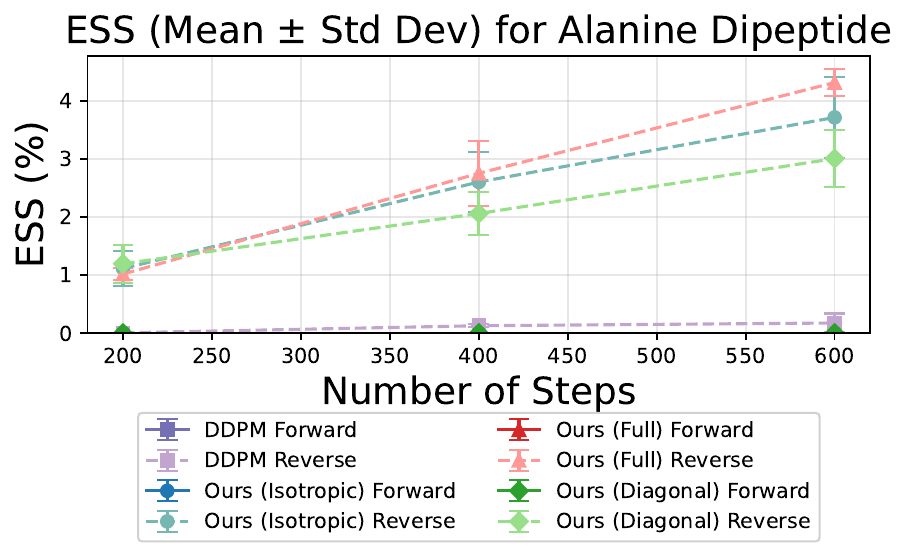}
    \caption{ALDP ($d=66$)}
    \label{fig:ala2_ess}
  \end{subfigure}
\caption{\textbf{ESS versus NFEs.} VT-DIS consistently outperforms the standard DDPM + IS baseline across all tasks under the same computational budget. ESS values are estimated using \(10^5\) samples.}
  \label{fig:ess_dw4_lj13}
\end{figure} 

\textbf{ELBO and EUBO.} We also report the standard likelihood bounds ($\mathrm{ELBO}\le\log p_\theta(\boldsymbol{x})\le\mathrm{EUBO}$), where a smaller gap indicates better agreement between the forward noising path and the learned reverse process. Figure~\ref{fig:elbo_dw4_lj13_aldp} shows that across DW-4 and LJ-13, \textsc{VT-DIS} consistently raises ELBO and lowers EUBO relative to the DDPM baseline, yielding a markedly tighter “sandwich” at every step budget. Overall, these trends mirror the ESS gains, indicating better forward–reverse matching under fixed compute.

\begin{figure}[t]
  \centering
  \begin{subfigure}[b]{0.32\linewidth}
    \centering
    \includegraphics[width=\linewidth]{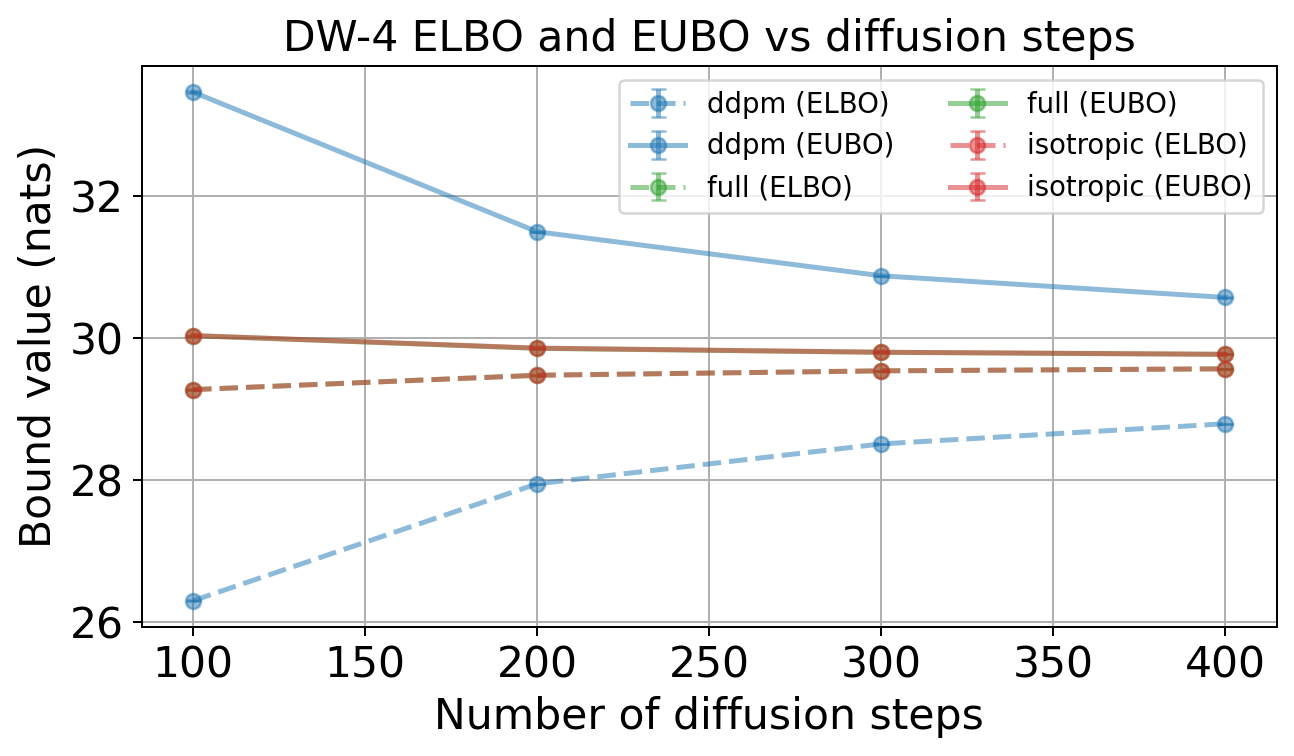}
    \caption{DW-4 (\(d=8\))}
    \label{fig:dw4_elbo}
  \end{subfigure}
  \begin{subfigure}[b]{0.32\linewidth}
    \centering
    \includegraphics[width=\linewidth]{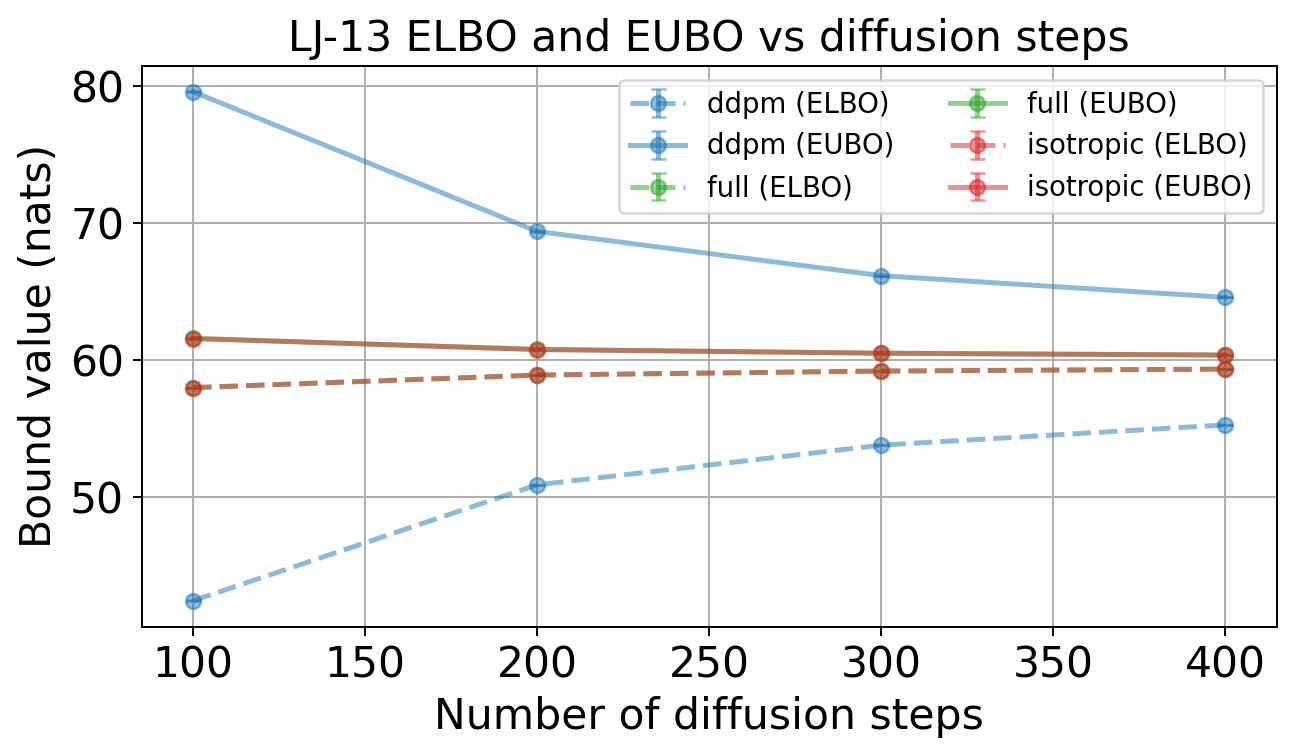}
    \caption{LJ-13 (\(d=39\))}
    \label{fig:lj13_elbo}
  \end{subfigure}
  \begin{subfigure}[b]{0.32\linewidth}
    \centering
    \includegraphics[width=\linewidth]{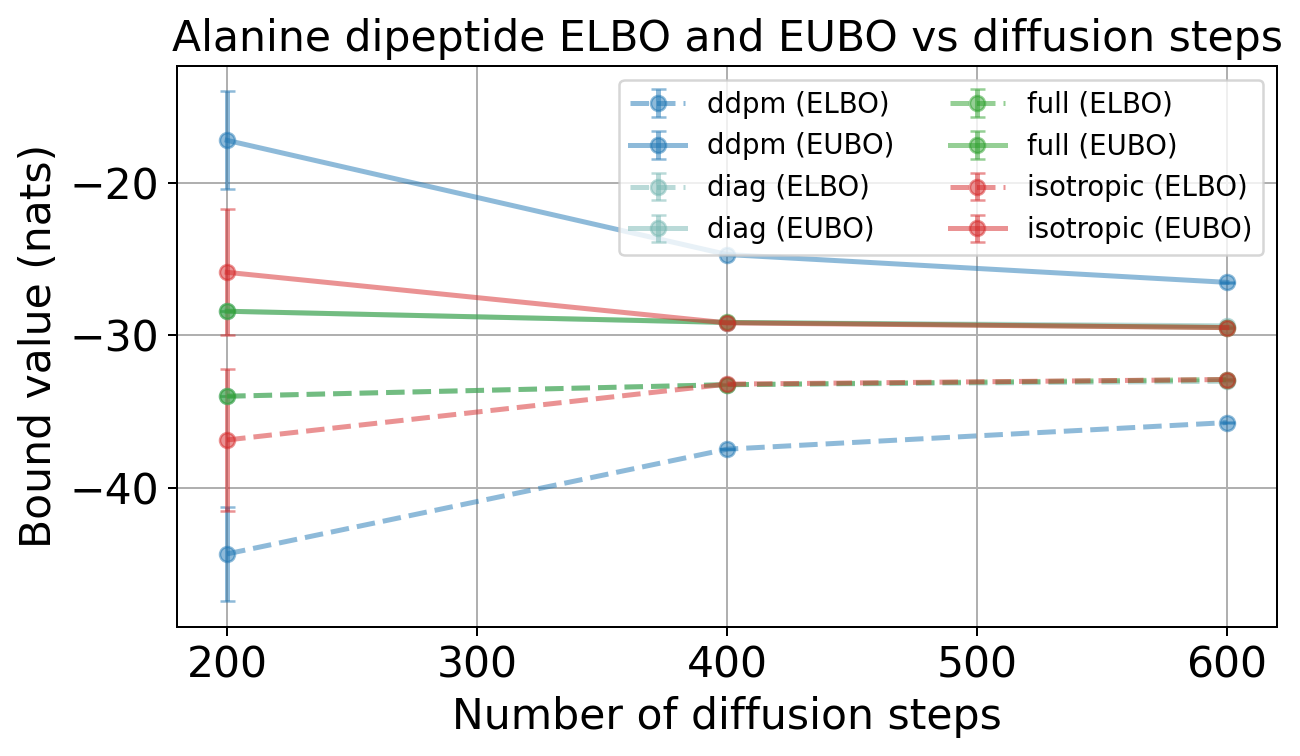}
    \caption{ALDP ($d=66$)}
    \label{fig:ala2_elbo}
  \end{subfigure}
\caption{\textbf{ELBO/EUBO vs. diffusion steps.} ELBO (higher is better) and EUBO (lower is better) for the DDPM baseline and \textsc{VT-DIS} (isotropic, diagonal and full covariances) as a function of the number of sampling steps. Error bars denote $\pm$ one standard deviation across 3 repeated MC evaluations.}

  \label{fig:elbo_dw4_lj13_aldp}
\end{figure}

\textbf{Bias correction.}
To illustrate bias correction we compare three energy histograms for each task:
(i) \emph{ground-truth} samples from the test dataset,
(ii) \emph{unweighted} VT-DIS generated samples,
and (iii) \emph{reweighted} samples.
Figure~\ref{fig:hist_dw4_lj13} shows that
the raw proposals are biased,
but importance reweighting realigns the distributions with the target,
corroborating the ESS improvements.

\begin{figure}[t]
  \centering
  \begin{subfigure}[b]{0.32\linewidth}
    \centering
    \includegraphics[width=\linewidth]{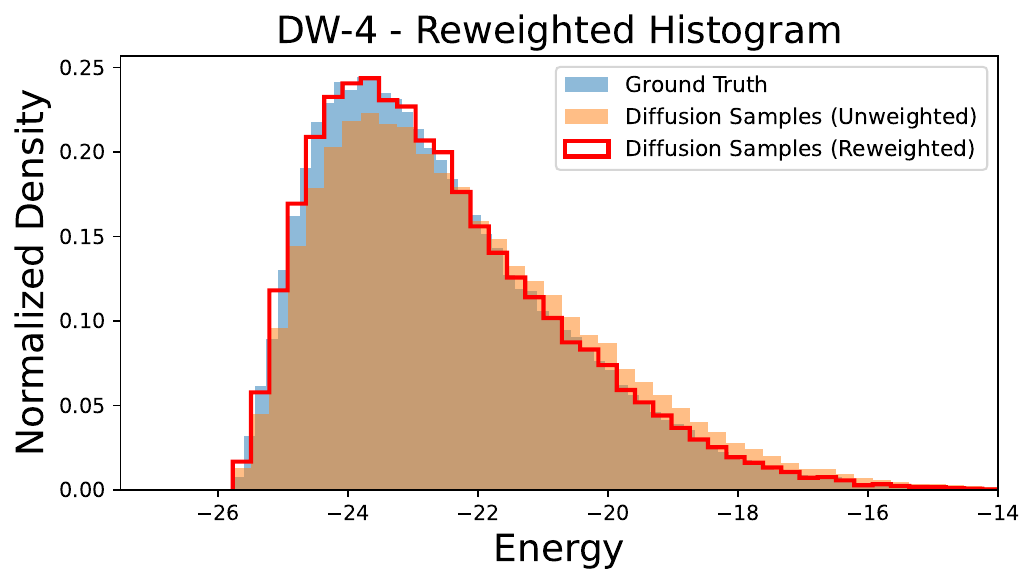}
    \caption{DW-4, 100 NFEs}
    \label{fig:hist_dw4}
  \end{subfigure}
  \begin{subfigure}[b]{0.32\linewidth}
    \centering
    \includegraphics[width=\linewidth]{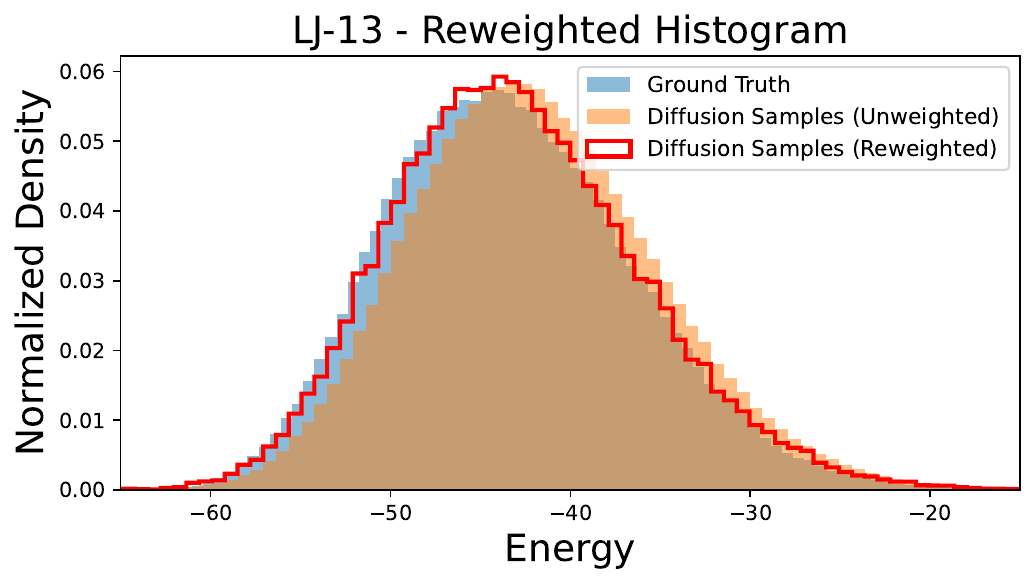}
    \caption{LJ-13, 100 NFEs}
    \label{fig:hist_lj13}
  \end{subfigure}
  \begin{subfigure}[b]{0.32\linewidth}
    \centering
    \includegraphics[width=\linewidth]{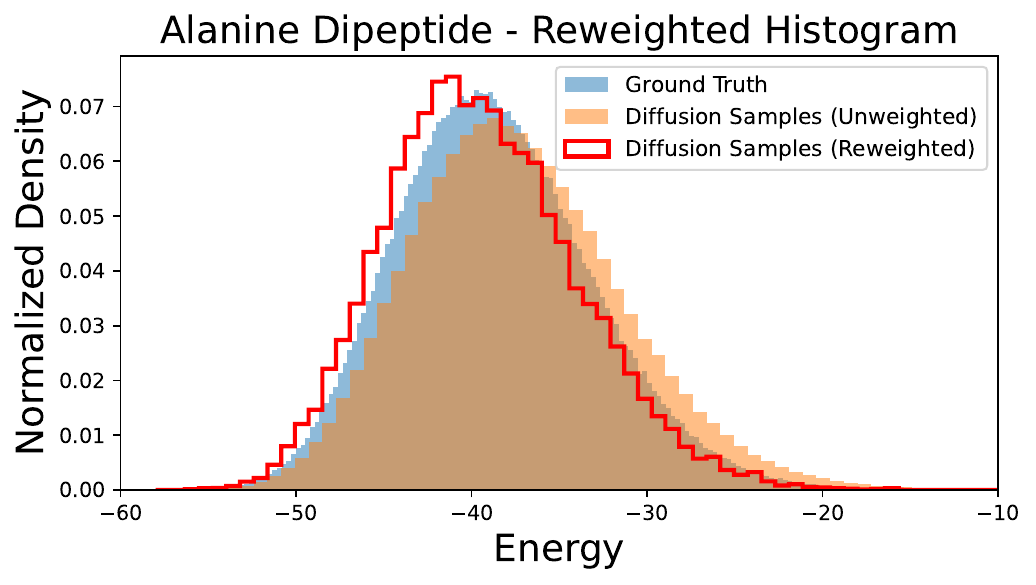}
    \caption{ALDP, 600 NFEs}
    \label{fig:ala2_hist}
  \end{subfigure}
\caption{\textbf{Energy distributions.} Unweighted samples generated by VT-DIS with isotropic covariance (orange) are biased relative to ground-truth samples (blue), but importance-weighted samples (red) effectively correct the discrepancy. Histograms are plotted using \(5\times10^5\) samples.}
  \label{fig:hist_dw4_lj13}
\end{figure}

\subsection{Small Molecule: Alanine Dipeptide (ALDP)}
\label{subsec:ala2}

We evaluate VT-DIS on the 66-dimensional Boltzmann distribution of alanine dipeptide (ALDP) in implicit solvent at temperature \(T = 800\,\mathrm{K}\). The unnormalized Boltzmann density is defined by a classical force-field potential, and the availability of its exact log density enables unbiased importance-sampling evaluation. Training and implementation details are provided in Appendix~\ref{appendix:implementation-details}.

\textbf{Results.} Figure~\ref{fig:ala2_ess} shows the forward and reverse ESS over a range of 200–600 NFEs. VT-DIS consistently improves the reverse ESS, reaching approximately 2.5\% at 400 NFEs, compared to under 0.5\% for the baseline. Among the covariance parameterizations, the full covariance version achieves slightly higher reverse ESS compared to diagonal and isotropic parameterizations. Figure~\ref{fig:ala2_elbo} further illustrates that VT-DIS with all three parameterizations yields a smaller gap between ELBO and EUBO relative to the DDPM baseline. Notably, the full covariance parameterization exhibits an even smaller ELBO–EUBO gap in the low diffusion-step regime, which aligns with the observed ESS performance.

Figure~\ref{fig:ala2_hist} displays the energy histogram, and Figure~\ref{fig: ram plot aldp} presents the Ramachandran plot. Both demonstrate that importance sampling effectively corrects bias in the generated samples. Notably, the molecular dynamics (MD) “ground-truth” samples themselves exhibit a slight bias, as the reweighted energy histogram shifts toward lower energies than the MD samples.

For both VT-DIS and the baseline, the forward ESS is nearly zero due to a small number of extreme density-ratio outliers. This indicates that both methods' joint proposal distributions assign extremely low probability to regions containing true samples from the test dataset. We hypothesize that the score model is missing certain modes, supported by three empirical observations: (i) in the unreweighted samples (middle plot of Figure~\ref{fig: ram plot aldp}), the mode near $\phi=\pi/2$ is underrepresented; (ii) running the DDPM baseline with very large numbers of time steps (1000 and 2000) still resulted in forward ESS $\approx 0$, indicating a support-coverage mismatch rather than discretization issues; (iii) subsequent experiments solving the PF ODE for importance sampling (Table~\ref{tab:dw4_lj13_perf}) similarly yielded forward ESS $\approx 0$ on Alanine Dipeptide.

Since the score model defines the mean of the per-step proposal distributions and VT-DIS only adjusts the covariance, its capacity to improve \emph{forward ESS} is limited if the score model is inadequately trained. Therefore, VT-DIS yields substantial improvement of forward ESS only when the score model sufficiently captures all modes. Two promising future directions include: (i) enhancing the training of the score model to better capture all relevant modes—a direction complementary to our current work, as VT-DIS can enhance any trained diffusion model; and (ii) extending VT-DIS to adjust both the mean and covariance of the per-step proposal distributions, particularly useful when the score model is missing certain modes.

\begin{figure}[t]
    \centering
    \includegraphics[width=1\linewidth]{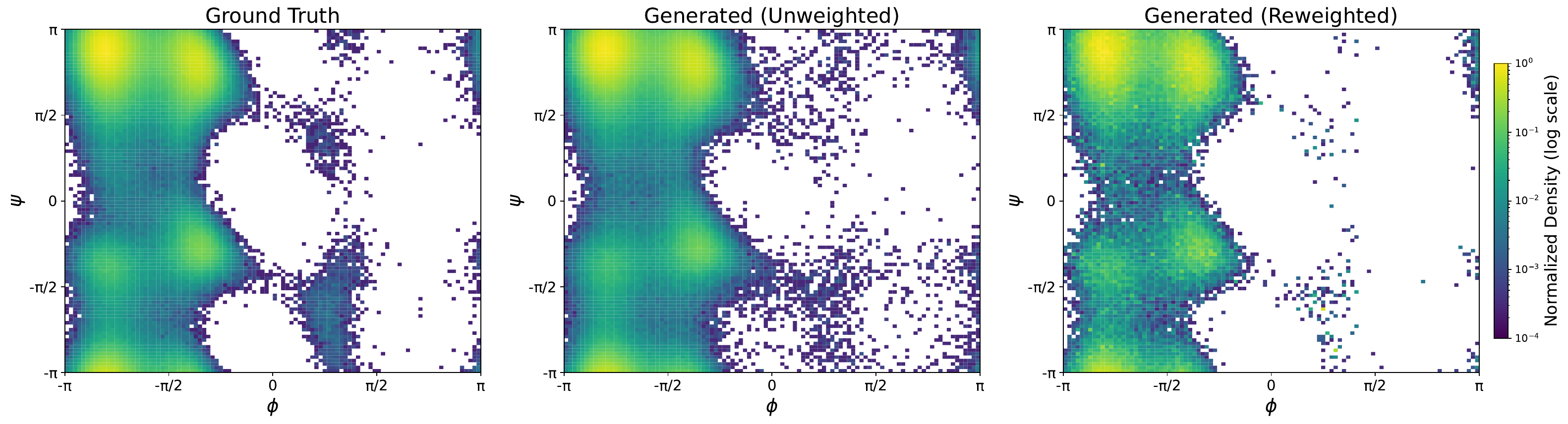}
\caption{Ramachandran plots from left to right: ground-truth samples generated by MD, VT-DIS (isotropic) samples before and after reweighting. All plots are generated using \(10^6\) samples.}
    \label{fig: ram plot aldp}
\end{figure}

\subsection{Computational Cost versus Probability–Flow ODE}
\label{subsubsec:cnf_comparison}
While VT-DIS and PF-ODE + IS both yield unbiased estimators, they differ substantially in computational efficiency. Here, we compare the ESS and computational cost of each method.

\textbf{Experimental setup.}  
To isolate inference cost, we reuse the \emph{same} pretrained score network and employ an identical number of integration steps for both methods. Computational effort is measured in \emph{giga floating-point operations (GFLOPs) per sample}, and accuracy is quantified by forward and reverse ESS. Results for DW-4, LJ-13, and ALDP are summarized in Table~\ref{tab:dw4_lj13_perf}. We also include an extended discussion and results on solving the PF-ODE using the trace estimator in Appendix~\ref{section: extended computational cost}.

\textbf{Results.}  
PF-ODE + IS attains higher ESS but incurs a steep computational penalty: its GFLOPs per sample are one to two orders of magnitude greater than VT-DIS’s, and this gap widens with dimensionality. In practice, PF-ODE requires a \emph{minimum} number of ODE steps to control proposal-density bias; each additional step further increases cost. VT-DIS, in contrast, trades some ESS for dramatically lower compute. Its estimator is asymptotically unbiased as the sample number increases, regardless of the time-step budget, and each additional reverse-SDE step directly boosts ESS.

\begin{table}[t]
  \centering
  \caption{Performance on DW-4, LJ-13 and ALDP with $10^{4}$ samples on a single A100. The number in parentheses after each dataset denotes the sampling steps. $\uparrow$ = higher is better, $\downarrow$ = lower is better.}
  \label{tab:dw4_lj13_perf}
  \begin{tabular}{@{}lcccccc@{}}
    \toprule
    & \multicolumn{2}{c}{\textbf{DW-4} (100)} & \multicolumn{2}{c}{\textbf{LJ-13} (200)} & \multicolumn{2}{c}{\textbf{ALDP} (200)}\\
    \cmidrule(lr){2-3}\cmidrule(l){4-5}\cmidrule(l){6-7}
    & {Rev. SDE} & {PF ODE} & {Rev. SDE} & {PF ODE} & {Rev. SDE} & {PF ODE}\\
    \midrule
    Forward ESS (\%) $\uparrow$  &  44.32 & 94.29 &  15.87 &    79.21  & $\approx 0$ & $\approx 0$ \\
    Reverse ESS (\%) $\uparrow$     &  46.76 & 83.60 & 17.69 &    77.42   & 1.05 & 35.39 \\
    GFlops / sample $\downarrow$       & 0.81      &  
                 7.70           & 26.2      &   2173     & 33.7 & 45967\\
    \bottomrule
  \end{tabular}
\end{table}

\section{Conclusion and Limitations}
\label{section: conclusion}

\textbf{Contributions.} We have introduced VT-DIS, a post-training procedure that can be applied to any pretrained diffusion model, and demonstrated its effectiveness in molecular generation. By computing trajectory-wise importance weights, VT-DIS achieves high effective sample size (ESS) across multiple benchmarks while using only a fraction of the computational budget required to solve the PF ODE for exact likelihood estimation. Furthermore, VT-DIS provides asymptotically unbiased expectation estimates as the number of samples grows—independent of the number of sampling steps. In contrast, PF-ODE solvers guarantee asymptotically unbiased sampling only when both the number of samples and the number of discretization steps increase, which limits their practical utility. These findings highlight VT-DIS’s potential for diffusion-based Boltzmann generators.

\textbf{Limitations and Future Work.} Despite its advantages, VT-DIS has several limitations. First, the post-training procedure must be repeated for each sampling schedule, which can incur significant computational and memory costs. Figure~\ref{fig:tuned_parameters} in Appendix~\ref{appendix: add experiments optimized cov} shows a clear pattern in optimal hyperparameters across different step counts; future work could leverage this pattern to develop a unified post-training algorithm that supports arbitrary schedules in one stage. Second, optimizing the sampling schedule itself to minimize the \(\alpha\)-divergence remains an open question; existing methods from \citep{sabour2024align,williams2024score} may offer a promising starting point. Third, VT‑DIS requires access to ground‑truth samples to tune the per‑step proposal covariances. When such samples are unavailable, a dataset must first be generated—e.g., via MCMC or molecular dynamics (MD). A useful direction for future work is to adapt VT‑DIS to operate with only a small set of approximate reference samples, which are easier to obtain.

\section*{Acknowledgments}
We thank Jiajun He and Javier Antorán for valuable discussions during the development of this work, and Ruikang Ouyang and Zhuoyue Huang for their insightful feedback on earlier drafts.  
FZ and JMHL acknowledge support from the Turing AI Fellowship under UKRI grant EP/V023756/1.  
LIM acknowledges support from Google’s TPU Research Cloud program and the EPSRC through the SynTech CDT.

\bibliography{main}

\newpage

\appendix
\section{Score-based Diffusion Models with Importance Sampling}

In this appendix, we present the mathematical derivations for combining score-based diffusion models with importance sampling. We first review the fundamentals of importance sampling and then show how it can be integrated into score-based diffusion to correct bias in generated samples.

\subsection{Review of Importance Sampling}

Importance sampling (IS) provides an estimator for expectations of the form
\[
\mathbb{E}_{\boldsymbol{x}\sim \pi_{\mathrm{data}}}[\phi(\boldsymbol{x})]
= \int \phi(\boldsymbol{x})\,\pi_{\mathrm{data}}(\boldsymbol{x})\,\mathrm{d}\boldsymbol{x},
\]
where $\pi_{\mathrm{data}}$ is the (possibly unnormalized) target density and $\phi$ is a test function. Directly sampling from $\pi_{\mathrm{data}}$ may be infeasible, but we assume that we can evaluate its density up to a normalization constant.  Instead, we draw samples $\{\boldsymbol{x}^{(n)}\}_{n=1}^N$ from a proposal distribution $p(\boldsymbol{x})$ and form the self-normalized estimator
\[
\mathbb{E}_{\boldsymbol{x}\sim \pi_{\mathrm{data}}}[\phi(\boldsymbol{x})]
\approx
\sum_{n=1}^N \bar{w}_n \,\phi\bigl(\boldsymbol{x}^{(n)}\bigr),
\]
where
\[
w_n = \frac{\pi_{\mathrm{data}}\bigl(\boldsymbol{x}^{(n)}\bigr)}{p\bigl(\boldsymbol{x}^{(n)}\bigr)},
\qquad
\bar{w}_n = \frac{w_n}{\sum_{m=1}^N w_m}.
\]
This self-normalized importance sampling (SNIS) estimator is consistent and asymptotically unbiased, with its variance decreasing as $N$ increases.

\subsection{Effective Sample Size}

The quality of an IS estimator is often measured by its effective sample size (ESS). Two common definitions are:
\begin{itemize}[left=0in]
  \item \emph{Forward ESS}, which evaluates the overlap with the target distribution:
  \begin{equation}
    \label{eq:ess-forward}
    \widehat{\mathrm{ESS}}_{\mathrm{fwd}}
    = \frac{N^2}{Z^{-1} \sum_{i=1}^N w\bigl(\boldsymbol{x}^{(i)}\bigr)}
    ,\quad \boldsymbol{x}^{(i)}\sim \pi_{\mathrm{data}},
  \end{equation}
  where $Z^{-1}\approx\frac{1}{N}\sum_{n=1}^N \frac{1}{w_n}$ is the inverse normalizing constant.
  \item \emph{Reverse ESS}, which measures the concentration of weights under the proposal:
  \begin{equation}
    \label{eq:ess-reverse}
    \widehat{\mathrm{ESS}}_{\mathrm{rev}}
    = \frac{\bigl(\sum_{i=1}^N w(\boldsymbol{y}^{(i)})\bigr)^2}
           {\sum_{i=1}^N w(\boldsymbol{y}^{(i)})^2}
    ,\quad \boldsymbol{y}^{(i)}\sim p.
  \end{equation}
\end{itemize}
While reverse ESS is widely used for evaluation, it can overestimate performance if $p$ suffers from mode collapse.  Forward ESS, by contrast, tends to provide a more reliable measure of overlap between the target and proposal.

\subsection{Combining Score-based Diffusion Models with Importance Sampling}
\label{appendix: diffusion with is}
We now consider task of estimating the integral $\mathbb{E}_{\boldsymbol{x}\sim\pi_{\text{data}}}[\phi(\boldsymbol{x})]$ where normalized or unnormalized $\pi_{\text{data}}$ is the target density which we can evaluate, and $\phi$ is the test function of interest. We aim to train a diffusion models to act as the proposal distribution which we can draw samples from. Suppose we have a time interval $[\epsilon, T]$ and we discretize the time interval into $N$ sections with $N+1$ time steps such that $\epsilon=t_0<t_1<\dots<t_N=T$. Assume the conditional proposal distribution of the diffusion model is $p_{\boldsymbol{\theta}}(\boldsymbol{x}_{n-1}|\boldsymbol{x}_n)$, and the conditional noise distribution is $q(\boldsymbol{x}_n|\boldsymbol{x}_{n-1})$ for $n = 1, \dots, N$. Using SNIS, the integral can be estimated as
\begin{align}
\mathbb{E}_{\boldsymbol{x}\sim\pi}[\phi(\boldsymbol{x})] &\approx \mathbb{E}_{\boldsymbol{x}_0\sim\pi}[\phi(\boldsymbol{x}_0)] \\
&= \int \phi(\boldsymbol{x}_0) \pi(\boldsymbol{x}_0) d\boldsymbol{x}_0 \\
&= \int \phi(\boldsymbol{x}_0) \pi(\boldsymbol{x}_0)\left(\prod_{n=1}^N q(\boldsymbol{x}_n|\boldsymbol{x}_{n-1})\right)d\boldsymbol{x}_{0:N} \\
&= \int \phi(\boldsymbol{x}_0) \left(\frac{\pi(\boldsymbol{x}_0) \prod_{n=1}^N q(\boldsymbol{x}_n|\boldsymbol{x}_{n-1})}{p_{\boldsymbol{\theta}}(\boldsymbol{x}_N) \prod_{n=1}^N p_{\boldsymbol{\theta}}(\boldsymbol{x}_{n-1}|\boldsymbol{x}_n)}\right)p_{\boldsymbol{\theta}}(\boldsymbol{x}_{0:N}) d\boldsymbol{x}_{0:N} \\
\label{equation: importance weights ddpm}
&\approx \frac{1}{K} \sum_{k=1}^K\underbrace{\left( \frac{\pi(\boldsymbol{x}^{(k)}_{t_0}) \prod_{n=1}^N q(\boldsymbol{x}^{(k)}_{t_n}|\boldsymbol{x}^{(k)}_{t_{n-1}})}{p_{\boldsymbol{\theta}}(\boldsymbol{x}^{(k)}_N) \prod_{n=1}^N p_{\boldsymbol{\theta}}(\boldsymbol{x}^{(k)}_{t_{n-1}}|\boldsymbol{x}^{(k)}_{t_n})} \right)}_{w_k}\phi(\boldsymbol{x}_0)
\end{align}
where $\boldsymbol{x}^{(k)}_{0:N}$ for $k = 1, \dots, K$ are samples from the joint proposal distribution $p_{\boldsymbol{\theta}}$ and $w_k$ for $k=1,\dots, K$ are the importance weights. Note that $\epsilon$ is chosen such that the error for the first approximation is negligible. 

We now need to specify the exact forms of $q(\boldsymbol{x}_n|\boldsymbol{x}_{n-1})$ and $p_{\boldsymbol{\theta}}(\boldsymbol{x}_{n-1}|\boldsymbol{x}_n)$. Recall the forward SDE with zero drift ($\boldsymbol{f}\equiv \boldsymbol{0}$) and diffusion coefficient $g(t)=\sqrt{2t}$:
\[
\mathrm{d}\boldsymbol{x}_t = \sqrt{2t}\,\mathrm{d}\boldsymbol{w}_t.
\]
Given a time discretization \(\epsilon = t_0 < t_1 < \cdots < t_N = T\), the conditional \emph{target} distribution of the forward process is Gaussian:
\begin{equation}
\label{eq:ddpm-forward-target}
p(\boldsymbol{x}_n \mid \boldsymbol{x}_{n-1})
= \mathcal{N}\bigl(\boldsymbol{x}_n;\,\boldsymbol{x}_{n-1},\,(t_n^2 - t_{n-1}^2)\,\boldsymbol{I}\bigr),
\qquad t_n > t_{n-1}.
\end{equation}

By Bayes’ rule, the DDPM-style \emph{reverse} (proposal) distribution conditional on \(\boldsymbol{x}_n\) and \(\boldsymbol{x}_0\) is
\begin{align}
\label{eq:ddpm-reverse-proposal}
p(\boldsymbol{x}_{n-1}\mid \boldsymbol{x}_n, \boldsymbol{x}_0)
&\;\propto\;
p(\boldsymbol{x}_n \mid \boldsymbol{x}_{n-1}) \;p(\boldsymbol{x}_{n-1}\mid \boldsymbol{x}_0)\\[-0.5ex]
&=\;
\mathcal{N}\!\bigl(\boldsymbol{x}_{n-1};\,\boldsymbol{\mu}(\boldsymbol{x}_n,\boldsymbol{x}_0),\,\sigma_{\mathrm{ddpm}}^2(n)\,\boldsymbol{I}\bigr),
\nonumber
\end{align}
where
\begin{align}
\boldsymbol{\mu}(\boldsymbol{x}_n,\boldsymbol{x}_0)
&= \frac{t_n^2}{t_{n-1}^2}\,\boldsymbol{x}_n \;+\;\Bigl(1-\tfrac{t_n^2}{t_{n-1}^2}\Bigr)\,\boldsymbol{x}_0,\\
\label{eq:ddpm-variance}
\sigma_{\mathrm{ddpm}}^2(n)
&= \frac{t_{n-1}^2\,\bigl(t_n^2 - t_{n-1}^2\bigr)}{t_n^2}.
\end{align}
In practice, $\boldsymbol{x}_0$ will be predicted via a neural network and thus the reverse conditional is defined as $p_{\boldsymbol{\theta}}(\boldsymbol{x}_{n-1}|\boldsymbol{x}_n)=p(\boldsymbol{x}_{n-1}|\boldsymbol{x}_n,\hat{\boldsymbol{x}}_0(\boldsymbol{x}_n, t_n;\boldsymbol{\theta}))$.
Since both forward and reverse conditionals are Gaussian with closed‐form parameters, the joint density ratio over trajectories can be evaluated exactly with minimal overhead.

\section{VT-DIS for E(3)-Equivariant Diffusion Models}
\label{app:vt-dis-e3}

In this appendix we detail how the proposed velocity-tempered diffusion importance
sampling (VT-DIS) post-training procedure can be integrated
with $\mathrm{E}(3)$-equivariant diffusion models.

\paragraph{Notation.}
Let $M \ge 2$ denote the number of particles and let $n$ be the spatial
dimension.  Concatenate the coordinates of all particles into the column vector
\[
    \boldsymbol{x} = \bigl(\boldsymbol{x}^{(1)}, \dots, \boldsymbol{x}^{(M)}\bigr)
    \in \mathbb{R}^{Mn},
    \qquad
    \boldsymbol{x}^{(i)} \in \mathbb{R}^{n}.
\]
Unless stated otherwise we restrict our attention to the zero–centre-of-mass
(CoM) subspace
\[
    \mathcal{X}_0
    = \Bigl\{ \boldsymbol{x} \in \mathbb{R}^{Mn} :
        \tfrac{1}{M}\sum_{i=1}^{M} \boldsymbol{x}^{(i)} = \boldsymbol{0}
      \Bigr\}.
\]

\paragraph{Change-of-basis matrix.}
To remove the CoM degree of freedom, we construct a change-of-basis matrix
$P$ that maps any $\boldsymbol{x}\in\mathcal{X}_0$ from the
$Mn$-dimensional ambient space to an $(M-1)n$-dimensional Euclidean space.
Choose an orthonormal matrix
$V_M \in \mathbb{R}^{(M-1)\times M}$
whose rows span the subspace
$\{\boldsymbol{z}\in\mathbb{R}^{M} : \boldsymbol{1}^{\top}\boldsymbol{z}=0\}$;
one explicit choice is obtained by applying a QR
factorization to $I_M - \tfrac{1}{M}\boldsymbol{1}\boldsymbol{1}^{\top}$.
Define
\[
    P := V_M \otimes I_n
    \;\in\; \mathbb{R}^{(M-1)n \times Mn},
\]
where $\otimes$ denotes the Kronecker product.
For any $\boldsymbol{x}\in\mathcal{X}_0$ we set
$\tilde{\boldsymbol{x}} = P\boldsymbol{x}\in\mathbb{R}^{(M-1)n}$.
The identities
\begin{equation}
    PP^{\top} = I_{(M-1)n},
    \qquad
    P^{\top}P
      = \bigl(I_M - \tfrac{1}{M}\boldsymbol{1}\boldsymbol{1}^{\top}\bigr)
        \otimes I_n,
    \label{eq:P-identities}
\end{equation}
follow immediately from $V_M V_M^{\top} = I_{M-1}$ and
$V_M^{\top}V_M = I_M - \tfrac{1}{M}\boldsymbol{1}\boldsymbol{1}^{\top}$.
The first identity shows that $P$ is an isometry between
$\mathcal{X}_0$ and $\mathbb{R}^{(M-1)n}$.

\paragraph{Evaluating and Sampling from $\mathcal{N}_x$ with full covariance}
For $\boldsymbol\mu\in\mathcal X_0$ and a positive-definite $\Sigma\in\mathbb R^{Mn\times Mn}$ whose image lies in $\mathcal X_0$, define the density
\begin{align}
\mathcal N_{\!x}(\boldsymbol x;\,\boldsymbol\mu,\Sigma)
&= \mathcal{N}(\tilde{\boldsymbol{x}}; \tilde{\boldsymbol{\mu}}, P\Sigma P^T)\\
\label{equation: com gaussian}
  &=\;\frac{1}{(2\pi)^{\frac{(M-1)n}{2}}\sqrt{\det(P\Sigma P^{\top})}}
        \exp\!\bigl[-\tfrac12(\boldsymbol x-\boldsymbol\mu)^{\top}P^{\top}P\,\Sigma^{-1}P^{\top}P(\boldsymbol x-\boldsymbol\mu)\bigr].
\end{align}
To get samples from this distribution, we have to first sample from the normal distribution with dimension $(M-1)n$ and then we map it back to the $Mn$ dimensional ambient space so that its center of gravity equals zero using the change-of-basis matrix $P$. This is the general way to get samples for $\mathcal{N}_x$ with full covariance. Things will simplify if the covariance becomes isotropic.

If $\Sigma=\sigma^{2}I_{Mn}$ is isotropic,~\eqref{equation: com gaussian} will becomes:
\begin{equation*}
\mathcal N_{\!x}(\boldsymbol x;\,\boldsymbol\mu,\sigma^2\boldsymbol{I})
  \;:=\;\frac{1}{(2\pi\sigma^2)^{\frac{(M-1)n}{2}}}
        \exp\!\bigl[-\frac{1}{2\sigma^2}(\boldsymbol x-\boldsymbol\mu)^{\top}(\boldsymbol x-\boldsymbol\mu)\bigr].
\end{equation*}
because of the fact that $PP^\top=I_{M-1}$. This is the Gaussian distribution constrained on zero CoM subspace with isotropic covariance \citep{hoogeboom2022equivariant, xu2022geodiff}. To sample from this distribution, we could sample in the $Mn$ dimensional ambient space directly, and subtract its center of mass \citep{xu2022geodiff, satorras2021n}.

\subsection{Proposed Covariance Structure Satisfies \texorpdfstring{$\mathrm{E}(3)$}{E(3)}-Equivariance}

A diffusion model is \(\mathrm{E}(3)\)-equivariant if, for every
orthogonal transformation
\(R\in\mathrm{O}(Mn)\),\footnote{%
  Translations are excluded because we restrict to the zero–CoM
  subspace~\(\mathcal{X}_0\); cf.\ Sec.~\ref{app:vt-dis-e3}.}
(i) the forward-process endpoint distribution
\(p(\boldsymbol{x}_T)\) is invariant,
and (ii) the reverse-process conditionals
\(p(\boldsymbol{x}_{t-1}\mid\boldsymbol{x}_t)\) are equivariant:
\[
  p(R\boldsymbol{x}_{t-1}\mid R\boldsymbol{x}_t)
  \;=\;
  p(\boldsymbol{x}_{t-1}\mid\boldsymbol{x}_t)
  \quad\forall\,t\in\{1,\dots,T\}.
\]
Invariance of the isotropic Gaussian prior
\(p(\boldsymbol{x}_T)=\mathcal{N}_x(\boldsymbol{0},T^2I_{Mn})\)
is immediate.
We therefore focus on the conditional law.

\begin{lemma}
  Let the reverse transition be
  \[
    p(\boldsymbol{x}_{t-1}\mid\boldsymbol{x}_t)
    \;=\;
    \mathcal{N}_x\!\bigl(\boldsymbol{x}_{t-1};
      \boldsymbol{x}_t,\,
      \Sigma\bigr),
    \qquad
    \Sigma \;=\; B\otimes I_n,
  \]
  where \(B\in\mathbb{R}^{M\times M}\) is positive definite.
  Then the reverse kernel is \(\mathrm{E}(3)\)-equivariant.
\end{lemma}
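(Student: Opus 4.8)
The plan is to reduce the claim to two Kronecker commutation identities and then read off equivariance directly from the closed-form density in \eqref{equation: com gaussian}. First I would fix the relevant group action: an element of $\mathrm{E}(3)$ acts on the concatenated coordinates by applying the \emph{same} orthogonal map $Q\in\mathrm{O}(n)$ to every particle, i.e.\ $R = I_M\otimes Q$. I would observe that such an $R$ is orthogonal on the ambient space, $R^\top R = I_M\otimes Q^\top Q = I_{Mn}$, and that it preserves the zero–CoM subspace $\mathcal{X}_0$, since the center of mass of $R\boldsymbol{x}$ equals $Q$ applied to the center of mass of $\boldsymbol{x}$; hence both arguments of $\mathcal{N}_x$ remain admissible after transformation.

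Next I would establish the two commutation relations on which everything hinges. Using the mixed-product property of the Kronecker product, $R$ commutes with the covariance, $R\Sigma = (I_M\otimes Q)(B\otimes I_n) = B\otimes Q = \Sigma R$, and likewise with $\Sigma^{-1} = B^{-1}\otimes I_n$. By the identical computation, $R$ commutes with the projector $P^\top P = (I_M - \tfrac1M\boldsymbol{1}\boldsymbol{1}^\top)\otimes I_n$ recorded in \eqref{eq:P-identities}. It then follows that the symmetric matrix $G := P^\top P\,\Sigma^{-1}\,P^\top P$ appearing in the exponent of $\mathcal{N}_x$ also commutes with $R$.

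With these facts in hand the verification is immediate. Writing $\boldsymbol{\delta} = \boldsymbol{x}_{t-1} - \boldsymbol{x}_t$, the exponent of $\mathcal{N}_x(R\boldsymbol{x}_{t-1};\,R\boldsymbol{x}_t,\,\Sigma)$ equals $-\tfrac12 (R\boldsymbol{\delta})^\top G\,(R\boldsymbol{\delta}) = -\tfrac12\,\boldsymbol{\delta}^\top R^\top G R\,\boldsymbol{\delta}$, and since $G$ commutes with $R$ and $R^\top R = I$ we get $R^\top G R = R^\top R\,G = G$, recovering exactly the exponent of $\mathcal{N}_x(\boldsymbol{x}_{t-1};\,\boldsymbol{x}_t,\,\Sigma)$. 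The normalizing constant depends only on $\det(P\Sigma P^\top)$, which is unchanged because $\Sigma$ is not itself transformed. Therefore $p(R\boldsymbol{x}_{t-1}\mid R\boldsymbol{x}_t) = p(\boldsymbol{x}_{t-1}\mid\boldsymbol{x}_t)$, which is the asserted equivariance.

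The only genuinely delicate point — and the step I would flag most carefully — is the identification of the admissible $R$. The result holds precisely because the physical symmetry acts as $I_M\otimes Q$, which is exactly the family that commutes with an arbitrary particle-space covariance $B\otimes I_n$; a generic element of $\mathrm{O}(Mn)$ that mixes particles would fail to commute with $\Sigma$ unless $B$ were isotropic. I would therefore state explicitly at the outset that $R$ ranges over the embedded copy $\{I_M\otimes Q : Q\in\mathrm{O}(n)\}$ rather than the full orthogonal group of the ambient space, so that the two commutation identities are available.
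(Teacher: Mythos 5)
Your proposal is correct and follows essentially the same route as the paper's proof: both arguments identify the admissible transformations as $R = I_M\otimes Q$ with $Q\in\mathrm{O}(n)$, and both rest on the two Kronecker mixed-product identities showing that $R$ commutes with $\Sigma = B\otimes I_n$ (equivalently $R^\top\Sigma^{-1}R=\Sigma^{-1}$) and with the CoM projector $P^\top P$, from which invariance of the quadratic form in \eqref{equation: com gaussian} follows. Your packaging via the single matrix $G = P^\top P\,\Sigma^{-1}\,P^\top P$, together with the explicit remarks that $R$ preserves $\mathcal{X}_0$ and that the normalizing constant is untouched, is a slightly tidier presentation of the identical argument.
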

\begin{proof}
  Every spatial rotation acts identically on all $M$ particles and can
  therefore be written as the block-orthogonal matrix
  \[
      R \;=\; I_M \otimes R_n,
      \qquad
      R_n \in \mathrm{O}(n).
  \]
  Recall that the reverse kernel is defined on the \((M-1)n\)-dimensional
  coordinate $\tilde{\boldsymbol{x}} := P\boldsymbol{x}$ via
  \[
      p(\boldsymbol{x}_{t-1}\mid\boldsymbol{x}_t)
      \;=\;
      \mathcal{N}\!\bigl(P\boldsymbol{x}_{t-1};
        P\boldsymbol{x}_t,\,
        P\Sigma P^T\bigr),
      \qquad
      \Sigma \;=\; B\otimes I_n,
      \quad
      B \succ 0.
  \]
  For arbitrary $\boldsymbol{x},\boldsymbol{y}\in\mathcal{X}_0$ we have
  \begin{align*}
    \log p(R\boldsymbol{y}\mid R\boldsymbol{x})
    &= \log\mathcal{N}(PR\boldsymbol{y}; PR\boldsymbol{x}, P\Sigma P^\top)\\
      &= -\tfrac12\!
         \bigl(PR\boldsymbol{y} - PR\boldsymbol{x}\bigr)^{\!\top}
         P\Sigma^{-1}P^\top
         \bigl(PR\boldsymbol{y} - PR\boldsymbol{x}\bigr)
         \;+\; \text{const} \\[4pt]
      &= -\tfrac12\!
         (\boldsymbol{y}-\boldsymbol{x})^{\!\top}R^\top P^\top P
         \Sigma^{-1}P^\top PR
         (\boldsymbol{y}-\boldsymbol{x}) \\[4pt]
      &\stackrel{(a)}{=} -\tfrac12\!
         (\boldsymbol{y}-\boldsymbol{x})^{\!\top}
         P^\top PR^\top
         \Sigma^{-1}
         RP^\top P
         (\boldsymbol{y}-\boldsymbol{x}) \\[4pt]
      &\stackrel{(b)}{=}
         -\tfrac12\!
         (\boldsymbol{y}-\boldsymbol{x})^{\!\top}
         P^\top P
         \Sigma^{-1}
         P^\top P
         (\boldsymbol{y}-\boldsymbol{x}) \\[4pt]
      &= -\tfrac12\!
         \bigl(P\boldsymbol{y}-P\boldsymbol{x}\bigr)^{\!\top}
         P\Sigma^{-1}P^\top
         \bigl(P\boldsymbol{y}-P\boldsymbol{x}\bigr) \\[2pt]
      &= \log\mathcal{N}(P\boldsymbol{y}; P\boldsymbol{x}, P\Sigma P^\top)\\
      &= \log p(\boldsymbol{y}\mid\boldsymbol{x}),
  \end{align*}
where step~$(a)$ uses the fact that \(P^{\top}P\) commutes with $R$:
\begin{align*}
RP^\top P = (I_n\otimes R_n)(V_M\otimes I_n) = V_M\otimes R_n = P^\top PR
\end{align*}
and step~$(b)$ uses the fact that
\begin{align*}
R^\top \Sigma^{-1}R &= (R^\top\Sigma R)^{-1}\\
&= ((I_M\otimes R_n^\top)(B\otimes I_n)(I_M\otimes R_n))^{-1}\\
&= (B\otimes R_n^\top R_n)^{-1}\\
&= (B\otimes I_n)^{-1}\\
&= \Sigma^{-1}
\end{align*}
Hence we have proved that our proposed covariance form satisfies the $\mathrm{E}(3)$-equivariance required.
\end{proof}

\subsection{Permutation Equivariance for Many-Particle Systems}
\label{section: permutation equivariance many particle systems}
Datasets such as \textsc{DW-4} and \textsc{LJ-13} treat all
particles as indistinguishable; hence the reverse kernel must be equivariant
under any permutation of the $M$ particles.

\begin{lemma}[Permutation equivariance]
  Let the covariance in the reduced coordinate space be
  \[
      \Sigma \;=\; B\otimes I_n,
      \qquad
      B \;=\; (b-a)\,I_{M-1} \;+\; a\,\boldsymbol{1}\boldsymbol{1}^{\!\top},
  \]
  with parameters \(b>0\) and \(a\in\mathbb{R}\) chosen such that
  \(
    b-a>0
  \)
  and
  \(
    b+(M-2)a>0,
  \)
  ensuring \(B\succ0\).
  For any permutation matrix
  \(S_M\in\mathbb{R}^{M\times M}\) and the block permutation
  \(S := S_M\otimes I_n\),
  the reverse transition satisfies
  \(
      p(S\boldsymbol{y}\mid S\boldsymbol{x})
      = p(\boldsymbol{y}\mid\boldsymbol{x})
  \)
  for all
  \(\boldsymbol{x},\boldsymbol{y}\in\mathcal{X}_0\).
\end{lemma}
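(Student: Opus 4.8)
The plan is to follow the template of the immediately preceding $\mathrm{E}(3)$-equivariance lemma almost verbatim, replacing the orthogonal spatial action $R = I_M \otimes R_n$ by the particle permutation $S = S_M\otimes I_n$. First I would note that since the covariance $\Sigma$ is the \emph{same} on both sides of the claimed identity (only the mean and the evaluation point are permuted), the Gaussian normalizing constant $1/\bigl((2\pi)^{(M-1)n/2}\sqrt{\det(P\Sigma P^\top)}\bigr)$ is literally identical for $p(S\boldsymbol{y}\mid S\boldsymbol{x})$ and $p(\boldsymbol{y}\mid\boldsymbol{x})$. Hence it suffices to show that the Mahalanobis exponent of \eqref{equation: com gaussian} is invariant, i.e.\ that $(\boldsymbol{y}-\boldsymbol{x})^\top S^\top P^\top P\,\Sigma^{-1}P^\top P\,S(\boldsymbol{y}-\boldsymbol{x}) = (\boldsymbol{y}-\boldsymbol{x})^\top P^\top P\,\Sigma^{-1}P^\top P(\boldsymbol{y}-\boldsymbol{x})$ for all $\boldsymbol{x},\boldsymbol{y}\in\mathcal{X}_0$.

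The argument then rests on two commutation facts, exactly mirroring steps $(a)$ and $(b)$ of the $\mathrm{E}(3)$ proof. (i) $P^\top P$ commutes with $S$: writing $P^\top P = (I_M - \tfrac1M\boldsymbol{1}\boldsymbol{1}^\top)\otimes I_n$ from \eqref{eq:P-identities} and $S = S_M\otimes I_n$, commutation reduces to $S_M(I_M - \tfrac1M\boldsymbol{1}\boldsymbol{1}^\top) = (I_M - \tfrac1M\boldsymbol{1}\boldsymbol{1}^\top)S_M$, which holds because a permutation fixes the all-ones vector, $S_M\boldsymbol{1} = \boldsymbol{1}$ and $\boldsymbol{1}^\top S_M = \boldsymbol{1}^\top$. (ii) $\Sigma$ is invariant under conjugation by $S$: since $S^\top\Sigma^{-1}S = (S^\top\Sigma S)^{-1}$ and $S^\top\Sigma S = (S_M^\top\otimes I_n)(B\otimes I_n)(S_M\otimes I_n) = (S_M^\top B S_M)\otimes I_n$, it is enough to prove $S_M^\top B S_M = B$.

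The crux---and the one genuinely content-bearing step---is verifying $S_M^\top B S_M = B$ for the equicorrelation covariance $B = (b-a)\boldsymbol{I} + a\,\boldsymbol{1}\boldsymbol{1}^\top$. The identity part is trivially permutation-invariant, while the rank-one part is preserved because $S_M^\top\boldsymbol{1}\boldsymbol{1}^\top S_M = (S_M^\top\boldsymbol{1})(S_M^\top\boldsymbol{1})^\top = \boldsymbol{1}\boldsymbol{1}^\top$, again by $S_M^\top\boldsymbol{1} = \boldsymbol{1}$; more conceptually, matrices of this form are exactly those commuting with every permutation, so $B$ is permutation-invariant by construction. Combining (i) and (ii), I would substitute $P^\top P\,S = S\,P^\top P$ into the permuted exponent, push $S$ through to meet $\Sigma^{-1}$, invoke $S^\top\Sigma^{-1}S = \Sigma^{-1}$, and collapse the remaining $P^\top P$ factors to recover the unpermuted quadratic form, yielding $\log p(S\boldsymbol{y}\mid S\boldsymbol{x}) = \log p(\boldsymbol{y}\mid\boldsymbol{x})$. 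I expect the only real obstacle to be bookkeeping the Kronecker-product algebra and being careful that the quadratic-form identity need only hold for differences lying in $\mathcal{X}_0$ (so that the factors $P^\top P$, which act as the identity there, cause no loss); the positive-definiteness constraints on $(a,b)$ play no role in the equivariance argument beyond guaranteeing that the density is well defined.
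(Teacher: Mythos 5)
Your proposal is correct and follows essentially the same route as the paper's own proof: the same three ingredients appear in the same roles, namely commutation of $S$ with the CoM projector $P^\top P$ via $S_M\boldsymbol{1}=\boldsymbol{1}$, invariance of the equicorrelation matrix $B$ under conjugation by $S_M$, and the resulting invariance of the Mahalanobis quadratic form (with the normalizing constant unchanged since $\Sigma$ is fixed). Nothing further is needed.
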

\begin{proof}
  \textbf{(i) Commutation with the CoM projector.}
  Using \(P^{\top}P = (V_M\otimes I_n)\) (Sec.~\ref{app:vt-dis-e3}) and the
  fact that \(S_M \boldsymbol{1} = \boldsymbol{1}\) and $\boldsymbol{1}^\top S_M=\boldsymbol{1}^\top$,
  \[
      SP^{\top}P
      = \bigl(S_M\bigl(I_M-\frac{1}{M}\boldsymbol{1}\boldsymbol{1}^\top\bigr)\bigr)\otimes I_n
      = \bigl(S_M-\frac{1}{M}\boldsymbol{1}\boldsymbol{1}^\top\bigr)\otimes I_n
      = P^{\top}PS,
  \]
  so \(S\) commutes with the projector onto the zero–CoM subspace.

  \textbf{(ii) Invariance of the covariance.}
  Because \(S_M\) merely reorders rows and columns,
  \[
      S_M B S_M^{\!\top}
      = (b-a)\,I_{M-1} + a\,S_M\boldsymbol{1}\boldsymbol{1}^{\!\top}S_M^{\!\top}
      = (b-a)\,I_{M-1} + a\,\boldsymbol{1}\boldsymbol{1}^{\!\top}
      = B,
  \]
  whence
  \(
      S\Sigma S^{\!\top}
      = \Sigma.
  \)

  \textbf{(iii) Equality of densities.}
  With the same quadratic-form argument used in the
  $\mathrm{E}(3)$-equivariance proof,
  \[
      (\boldsymbol{y}-\boldsymbol{x})^{\!\top}P^{\top}P\,\Sigma^{-1}\,P^{\top}P
      (\boldsymbol{y}-\boldsymbol{x})
      \;=\;
      (S\boldsymbol{y}-S\boldsymbol{x})^{\!\top}
      P^{\top}P\,\Sigma^{-1}\,P^{\top}P
      (S\boldsymbol{y}-S\boldsymbol{x}),
  \]
  establishing
  \(p(S\boldsymbol{y}\mid S\boldsymbol{x})
    = p(\boldsymbol{y}\mid\boldsymbol{x})\).
\end{proof}

\subsection{Permutation Equivariance for General Molecular Distributions}
\label{appendix: permutation equivariance for general molecule distributions}
Again, we aim to show that our choice of $B_{\boldsymbol{\phi}}$ satisfies the permutation-equivariance constraint under the parametrization proposed in Section~\ref{section: e3 diffusion with importance sampling}.

\textbf{Permutation equivariance.}\;
Let $S_M$ denote any permutation matrix that reorders atoms \emph{within} each
label class (i.e., $L_{S_M(i)} = L_i$).
Analogous to the proof in Section~\ref{section: permutation equivariance many particle systems}, because any permutation matrix commutes with the CoM projector, this condition is automatically satisfied.  
All that remains is to prove the invariance of the covariance.  
Because the two parameterizations above depend on the indices only through the labels, we have
\[
    S_M\,B_{\boldsymbol{\phi}}(n)\,S_M^{\!\top}=B_{\boldsymbol{\phi}}(n).
\]
Consequently, the lifted covariance
\[
    \Sigma_{\boldsymbol{\phi}}(n)=B_{\boldsymbol{\phi}}(n)\otimes I_n
\]
satisfies
\[
    S\,\Sigma_{\boldsymbol{\phi}}(n)\,S^{\!\top}=\Sigma_{\boldsymbol{\phi}}(n)
\]
for every block permutation
\(
    S = S_M \otimes I_n,
\)
so the reverse kernel remains permutation-equivariant by the argument of the above section.

\section{Implementation Details}
\label{appendix:implementation-details}

All experiments were run on a single NVIDIA A100 GPU (80 GB); hence every “GPU hour’’ refers to wall-clock hours on that device.

\subsection{Implementation of the objective function}
\label{section: implementation of objective}
Recall the objective (the $\alpha$-divergence with $\alpha=2$):
\[
\mathcal{L}(\boldsymbol{\phi})
=\mathbb{E}_{q}\!\left[\frac{q(\mathbf{x}_{0:N})}{p_{\boldsymbol{\phi}}(\mathbf{x}_{0:N})}\right]
\approx \frac{1}{M}\sum_{m=1}^M \frac{q(\mathbf{x}_{0:N}^{(m)})}{p_{\boldsymbol{\phi}}(\mathbf{x}_{0:N}^{(m)})}
= \frac{1}{M}\sum_{m=1}^M w^{(m)},
\]
where $\mathbf{x}^{(m)}_{0:N}\sim q$, $w^{(m)}:=\frac{q(\mathbf{x}^{(m)}_{0:N})}{p_{\boldsymbol{\phi}}(\mathbf{x}^{(m)}_{0:N})}$, and $M$ is the number of samples used to estimate the objective.

In practice, we optimize the \emph{log} of the $\alpha$-divergence for numerical stability, which can be computed efficiently using the log-importance weights via the \texttt{logsumexp} trick. Specifically,
\begin{equation}
\label{equation: log alpha divergence}
\mathcal{J}(\boldsymbol{\phi})
:=\log \mathcal{L}(\boldsymbol{\phi})
\approx \log\!\left(\frac{1}{M}\sum_{m=1}^M \exp(\log w^{(m)})\right)
= \texttt{logsumexp}\!\big(\{\log w^{(m)}\}_{m=1}^M\big) - \log M .
\end{equation}
Hence, we work entirely in log space, which yields a numerically stable estimator of the loss.

\subsection{Datasets}
\begin{itemize}[leftmargin=*]
\item \textbf{GMM-2.}  
      A two-component Gaussian mixture used as a toy benchmark.
      Training and evaluation samples are drawn directly from the target
      distribution.

\item \textbf{DW-4 and LJ-13.}  
      We use the $10^{7}$ configurations released by \citet{klein2023equivariant}.  
      A subset of $10^{5}$ configurations trains the score model and VT-DIS;
      the remainder forms the test set.

\item \textbf{Alanine dipeptide.}  
      We adopt the molecular dataset of
      \citet{midgley2023se}, which provides $10^{5}$ training samples and
      $10^{6}$ test samples.
\end{itemize}

\subsection{Training the Score Model}
Our training pipeline—including parameterization and preconditioning—follows \citet{karras2022elucidating}.

\begin{itemize}[leftmargin=*]
\item \textbf{GMM-2.}  
      A seven-layer MLP (hidden width 512), identical to the network used by \citet{akhound2024iterated} for the GMM-40 task.  
      Training runs for 100,000 iterations with a batch size of 1,024 for both \(d=50\) and \(d=100\).

\item \textbf{DW-4.}  
      EGNN backbone with 4 layers and a hidden width of 128.

\item \textbf{LJ-13.}  
      EGNN backbone with 6 layers and a hidden width of 128.

\item \textbf{Alanine Dipeptide.}  
      EGNN backbone with 7 layers and a hidden width of 256; node features include atom types and bond information.  
      Scaling atomic coordinates by 10 during training improves sample quality; network outputs are divided by 10 at inference.
\end{itemize}

All models are trained for 100,000 iterations with a batch size of 512 using the Adam optimizer, a learning rate of 0.001, and a cosine annealing schedule with a minimum learning rate of \(1\times10^{-6}\). Table~\ref{tab:score-training-time} summarizes these details.

\begin{table}[t]
\centering
\caption{Score-model training time on a single A100.}
\label{tab:score-training-time}
\begin{tabular}{@{}lcccccc@{}}
\toprule
Dataset   & Architecture             & Layers & Hidden dim & Batch Size & Iterations & GPU-h \\ \midrule
GMM-2 ($d{=}50$) & MLP      & 7      & 512        & 5000  & 100 000    & 0.6     \\
GMM-2 ($d{=}100$) & MLP      & 7      & 512        & 5000  & 100 000    & 1.0     \\
DW-4         & EGNN          & 4      & 128        & 512   & 100 000    & 0.5     \\
LJ-13        & EGNN          & 6      & 128        & 512   & 100 000    & 8.0     \\
Alanine dipeptide  & EGNN    & 7      & 256        & 512   & 100 000    & 8.0     \\ \bottomrule
\end{tabular}
\end{table}

\subsection{VT-DIS Post-training}
\label{subsec:vtdis_posttraining}

We follow Algorithm~\ref{alg:vt-dis} exactly. With the score network frozen, we optimize the per-step covariance matrices by minimizing the \(\alpha\)-divergence (\(\alpha=2\)) between the forward and reverse processes using the same training set as for score learning. Each run comprises 5,000 optimization steps with a batch size of 512 (256 for alanine dipeptide). All runs use the Adam optimizer with a learning rate of 0.01 and a cosine annealing schedule with a minimum learning rate of \(1\times10^{-6}\). Table~\ref{tab:vtdis-training-time} summarizes these settings.

The reported training time corresponds to tuning the isotropic covariance; diagonal and full covariance forms incur similar runtimes. We fix the number of time steps to 100; empirically, with fixed epochs and batch sizes, training time increases linearly with the number of sampling steps.

\begin{table}[t]
\centering
\caption{VT-DIS post-training time on a single A100.}
\label{tab:vtdis-training-time}
\begin{tabular}{@{}lcccc@{}}
\toprule
Dataset           & Batch Size & Iterations & Steps $K$ & GPU-h \\ \midrule
GMM-2 $(d=50)$             & 512   & 5 000      & 100        & 1.8     \\
GMM-2 $(d=100)$            & 512   & 5 000      & 100        & 3.5     \\
DW-4              & 512   & 5 000      & 100        & 0.9     \\
LJ-13             & 512   & 5 000      & 100        & 2.7     \\
Alanine dipeptide & 128   & 5 000      & 100        & 4.3     \\ \bottomrule
\end{tabular}
\end{table}

\subsection{Evaluation Metrics}
\label{subsec:evaluation_metrics}

We measure efficiency using forward and reverse effective sample sizes (ESS), defined in Eqs.~\eqref{eq:ess-forward} and \eqref{eq:ess-reverse}. The number of Monte Carlo draws for each estimate is specified in the corresponding figure or table caption. Table~\ref{tab:vtdis-sampling-time} summarizes the sampling time for each target distribution. Empirically, sampling time increases linearly with the number of sampling steps.

\begin{table}[t]
\centering
\caption{VT-DIS sampling time on a single A100.}
\label{tab:vtdis-sampling-time}
\begin{tabular}{@{}lccc@{}}
\toprule
Dataset           & Batch Size & Steps $K$ & GPU-min \\ \midrule
GMM-2 $(d=50)$    & 100,000   & 100      & 0.25\\
GMM-2 $(d=100)$   & 100,000   & 100      & 0.5\\
DW-4              & 100,000   & 100      & 1.3\\
LJ-13             & 100,000   & 100      & 4.3\\
Alanine dipeptide & 10,000   & 100      & 3 \\ 
\bottomrule
\end{tabular}
\end{table}

\section{Additional Experimental Results}
\subsection{Computational Cost for Solving Probability Flow ODE}
\label{section: extended computational cost}
In this section, we compare effective sample size (ESS) and inference‑time computational cost for an additional baseline: PF‑ODE with JVP‑based trace estimation. Table~\ref{tab:dw4lj13_pf_ode_extended} summarizes the results. Using Hutchinson’s trace estimator reduces computational cost; however, with a single probe vector the ESS drops sharply, whereas using 10 probe vectors increases both cost and ESS. VT‑DIS retains the lowest inference‑time cost among all baselines considered. We emphasize that although Hutchinson’s estimator is unbiased for the instantaneous divergence, using stochastic divergence estimates to compute importance weights can introduce bias in self‑normalized IS estimates. Consistent with prior work on these datasets (e.g., \citep{klein2023equivariant, tan2025scalable}), our main‑text comparison computes exact divergences when applying IS to avoid this bias. Accordingly, we caution against using stochastic trace estimators for IS and explicitly flag this caveat in the PF‑ODE+trace results.

\begin{table}[t]
  \centering
  \begin{threeparttable}
  \caption{Performance of importance sampling with diffusion on DW-4, LJ-13 and ALDP with $10^{4}$ samples on a single A100 GPU. The number in parentheses after each dataset denotes the sampling steps. Methods: reverse SDE (VT-DIS) or PF ODE (exact or Hutchinson with $n$ probe vectors). $\uparrow$ = higher is better, $\downarrow$ = lower is better.}
  \label{tab:dw4lj13_pf_ode_extended}
  \begin{tabular}{@{}l l S[table-format=3.2] S[table-format=3.2] S[table-format=5.2]@{}}
    \toprule
    \textbf{Dataset (steps)} & \textbf{Method} &
      {\textbf{Forward ESS (\%)} $\uparrow$} &
      {\textbf{Reverse ESS (\%)} $\uparrow$} &
      {\textbf{GFLOP / sample} $\downarrow$} \\
    \midrule
    \multirow{4}{*}{\textbf{DW-4} (100)}
      & SDE (Ours)                     & 44.32 & 46.76 & 0.81 \\
      & ODE (Exact)                       & 94.29 & 83.60 & 7.70 \\
      & ODE (Est. \ensuremath{n{=}1})\tnote{\dag}
                                             & 37.70 & 0.95 & 1.56 \\
      & ODE (Est. \ensuremath{n{=}10})\tnote{\dag}
                                             & 80.60 & 82.67 & 8.27 \\
    \midrule
    \multirow{4}{*}{\textbf{LJ-13} (200)}
      & SDE (Ours)                     & 15.87 & 17.69 & 26.20 \\
      & ODE (Exact)                       & 79.21 & 77.42 & 2173.00 \\
      & ODE (Est. \ensuremath{n{=}1})\tnote{\dag}
                                             & 12.35 & 9.50 & 52.26 \\
      & ODE (Est. \ensuremath{n{=}10})\tnote{\dag}
                                             & 61.12 & 66.10 & 285.62 \\
    \midrule
    \multirow{4}{*}{\textbf{ALDP} (200)}
      & SDE (Ours)                     & 0.00 & 1.05 & 33.70 \\
      & ODE (Exact)                       & 0.00 & 35.39 & 45967.00 \\
      & ODE (Est. \ensuremath{n{=}1})\tnote{\dag}
                                             & 0.00 & 0.74 & 702.78 \\
      & ODE (Est. \ensuremath{n{=}10})\tnote{\dag}
                                             & 0.00 & 22.31 & 3850.39 \\
    \bottomrule
  \end{tabular}
  \begin{tablenotes}\footnotesize
    \item[\dag] Hutchinson trace estimator with \(n\) probe vectors; introduces bias in importance sampling and can misstate ESS.
  \end{tablenotes}
  \end{threeparttable}
\end{table}

\subsection{Optimized Covariance Matrices}
\label{appendix: add experiments optimized cov}
\begin{figure}[t]
  \centering
  \begin{subfigure}[b]{0.48\linewidth}
    \centering
    \includegraphics[width=\linewidth]{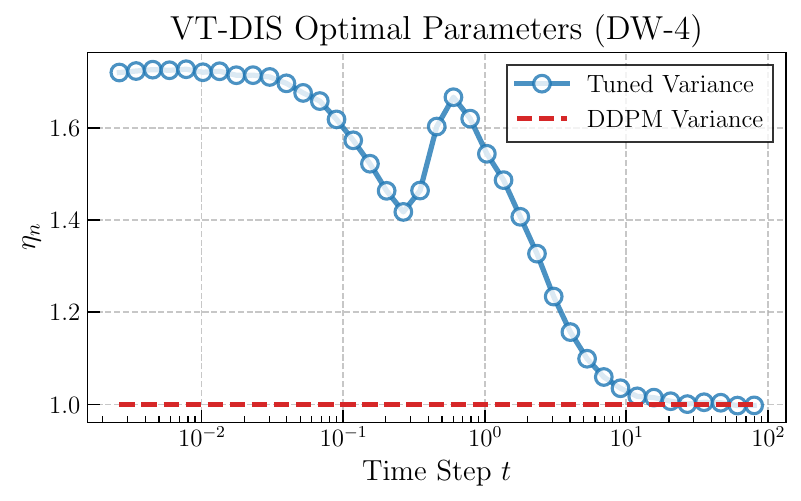}
    \caption{DW-4 (40 steps)}
    \label{fig:tuned_dw4}
  \end{subfigure}
  \hfill
  \begin{subfigure}[b]{0.48\linewidth}
    \centering
    \includegraphics[width=\linewidth]{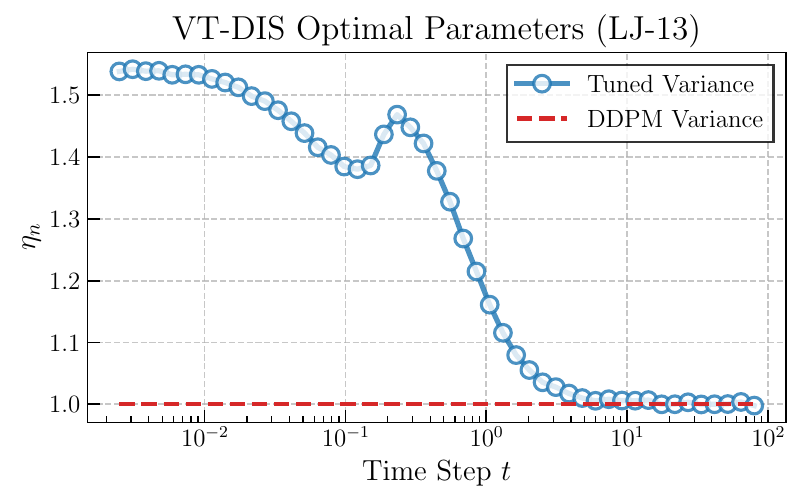}
    \caption{LJ-13 (50 steps)}
    \label{fig:tuned_lj13}
  \end{subfigure}
  \caption{\textbf{Optimal variance scaling \(\eta_n\).}
  VT-DIS inflates early-time variance to improve coverage,
  adapts non-monotonically across the transition regime,
  and converges to the DDPM value (\(\eta_n=1\), red dashed line)
  in the pure-noise limit.}
  \label{fig:tuned_parameters}
\end{figure}
\label{appendix: additional experiment discussion}
Figure~\ref{fig:tuned_parameters} contrasts the per–step
variance–scaling factor of VT-DIS,
\(\eta_n\) (see \eqref{equation: isotropic parameterization}),
with the unit baseline of the original DDPM.
Although we display the isotropic case, the same qualitative pattern
appears for diagonal and low–rank covariances.

\paragraph{Three regimes.}
Reading the \(t\)-axis from left (\(t\!\approx\!10^{-3}\)) to right
(\(t\!\approx\!10^{2}\)) follows the
reverse SDE from data space to pure noise.
The learned profile is \emph{not} monotone but exhibits
three characteristic regimes:

\begin{enumerate}[leftmargin=1.5em,itemsep=0.25em]
\item \textbf{Data–dominated regime} (\(t\lesssim 10^{-1}\)).  
      Here the signal-to-noise ratio (SNR) is high,
      the score network is the main source of model error,
      and under-dispersion relative to the true forward kernel
      results in heavy importance-weight tails.
      VT-DIS compensates by \emph{inflating} the variance
      (\(\eta_n\approx 1.6\text{–}1.7\)),
      which widens the proposal and improves coverage.

\item \textbf{Transition regime} (\(10^{-1}\lesssim t \lesssim 10^{1}\)).  
    As the SNR drops, the target conditional gradually morphs from highly non-Gaussian to approximately Gaussian. The mismatch between the analytic DDPM variance (optimal for the \emph{Gaussian} limit) and the true kernel therefore decreases. VT-DIS first reduces the scaling (a local minimum at \(t\!\approx\!0.3\)), then overshoots to a peak at \(t\!\approx\!1\), before descending. The intermediate bump arises where the learned score \(D_{\boldsymbol{\theta}}\) tends to \emph{over-estimate} the gradient magnitude (a common artefact of score matching when the SNR is intermediate, e.g. \(\mathcal{O}(1)\)) \citep{xu2023stable}. Increasing the proposal variance temporarily counter-balances this over-confidence, reducing the second-moment of the importance weights.

\item \textbf{Noise regime} (\(t\gtrsim 10^{1}\)).  
      The state distribution is now close to \(\mathcal{N}(0,t^2I)\),
      for which the DDPM variance is \emph{exactly} optimal.
      Accordingly, \(\eta_n\) relaxes to~1,
      confirming that VT-DIS does not perturb steps that are already
      statistically efficient.
\end{enumerate}

\paragraph{Implications.}
The variance inflations at early steps are crucial:
without them, trajectories started near the data manifold
place negligible mass in regions where the forward kernel
has significant probability, causing weight explosion
and the near-zero forward ESS observed for the vanilla baseline.
Conversely, keeping \(\eta_n\!\to\!1\) in the noise regime
ensures that VT-DIS retains the favourable properties of the original DDPM sampler at large time steps, adding no unnecessary dispersion..

\subsection{KL versus \texorpdfstring{$\alpha$}{alpha}-divergence}
\label{subsubsec:kl_vs_alpha}

The VT-DIS objective in \eqref{eq: objective function} minimises the \(\alpha\)-divergence with
\(\alpha=2\),
which is proportional to the \emph{variance} of importance weights and therefore maximizes ESS directly \citep{midgley2022flow}.
For completeness we retrain the covariance parameters on \textsc{DW-4}
using the forward KL divergence
(\(\alpha=1\)) and compare the resulting samplers.

Figure~\ref{fig:ess_alpha_comparison} plots forward (solid) and reverse
(dashed) ESS as functions of the sample size \(M\).
Tuning with \(\alpha=2\) yields consistently higher and \emph{stable}
ESS in both directions.
In contrast, KL-tuned parameters give (i) low forward ESS and
(ii) reverse ESS that \emph{decreases} with \(M\),
a classic symptom of heavy weight tails caused by poor proposal coverage.
These observations confirm that controlling the
second moment of weights (\(\alpha=2\)) is crucial for robust importance
sampling, whereas KL divergence alone fails to regularise the tails.

\begin{figure}[t]
  \centering
  \includegraphics[width=0.58\linewidth]{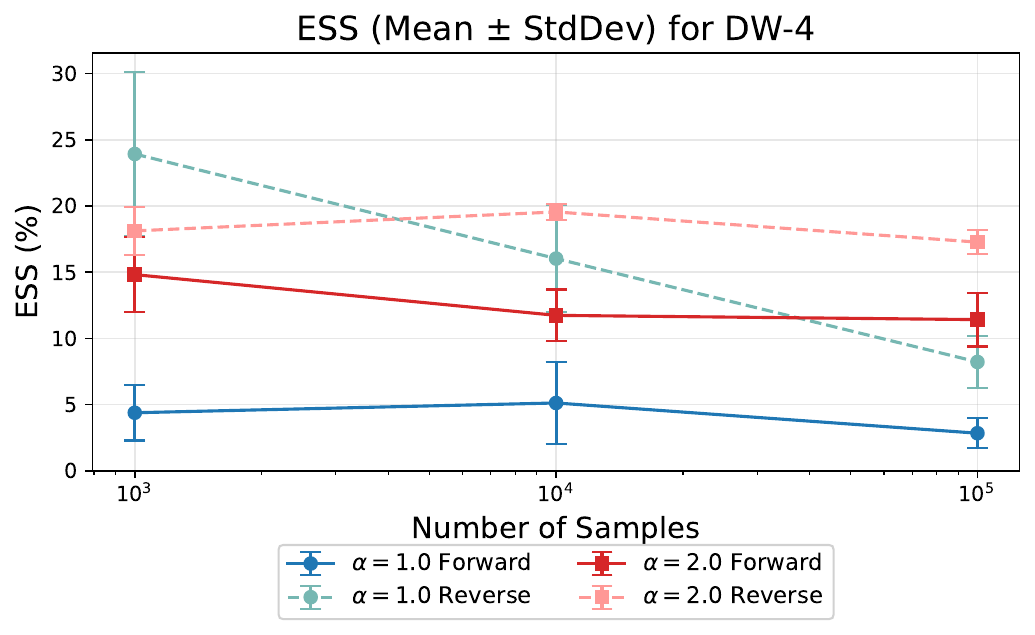}
  \caption{\textbf{KL (\(\alpha=1\)) vs.\ \(\alpha=2\) divergence on \textsc{DW-4}.}
           \(\alpha=2\) tuning (red) attains higher, size-independent ESS,
           whereas KL tuning (blue) collapses for large \(M\).}
  \label{fig:ess_alpha_comparison}
\end{figure}

\section{Broader Impacts}
\label{appendix: broader impacts}
Our method enhances the efficiency and accessibility of unbiased molecular sampling, potentially speeding up the discovery of catalysts, drugs, and new materials while reducing reliance on costly experiments. At the same time, stronger samplers could be misused to design toxic or environmentally harmful compounds more rapidly. There is also a risk that institutions with advanced computing resources will widen the gap with those lacking such infrastructure, reinforcing inequities in scientific research.

\end{document}